\titlespacing*{\section}{0pt}{*0.1}{*0.1}
\titlespacing*{\subsection}{0pt}{*0.1}{*0.1}
\titlespacing*{\subsubsection}{0pt}{*0.1}{*0.1}
\newenvironment{proof}{\par\noindent{\bf Proof\ }}{\hfill\BlackBox\\[2mm]}
\newcommand{\BlackBox}{\rule{1.5ex}{1.5ex}}  
\def \citet {\cite}
\newtheorem{assumption}{Assumption}
\newtheorem{lemma}{Lemma}
\title{Layer-Dependent Importance Sampling for Training Deep and Large Graph Convolutional Networks}
\author{%
   Difan Zou\thanks{equal contribution}\ ,  Ziniu Hu$^*$, Yewen Wang, Song Jiang, Yizhou Sun, Quanquan Gu  \\
  Department of Computer Science,
  UCLA,
  Los Angeles, CA 90095 \\
  \texttt{\{knowzou,bull,wyw10804,songjiang,yzsun,qgu\}@cs.ucla.edu} 
  }
\begin{document}
\maketitle

\begin{abstract}
Graph convolutional networks (GCNs) have recently received wide attentions, due to their successful applications in different graph tasks and different domains. Training GCNs for a large graph, however, is still a challenge. Original full-batch GCN training requires calculating the representation of all the nodes in the graph per GCN layer, which brings in high computation and memory costs. To alleviate this issue, several sampling-based methods have been proposed to train GCNs on a subset of nodes. Among them, the node-wise neighbor-sampling method recursively samples a fixed number of neighbor nodes, and thus its computation cost suffers from exponential growing neighbor size; while the layer-wise importance-sampling method discards the neighbor-dependent constraints, and thus the nodes sampled across layer suffer from sparse connection problem. To deal with the above two problems, we propose a new effective sampling algorithm called \underline{LA}yer-\underline{D}ependent \underline{I}mportanc\underline{E} \underline{S}ampling (LADIES) \footnote{codes are avaiable at \url{https://github.com/acbull/LADIES}}. Based on the sampled nodes in the upper layer, LADIES selects their neighborhood nodes, constructs a bipartite subgraph and computes the importance probability accordingly. Then, it samples a fixed number of nodes by the calculated probability, and recursively conducts such procedure per layer to construct the whole computation graph. We prove theoretically and experimentally, that our proposed sampling algorithm outperforms the previous sampling methods in terms of both time and memory costs. Furthermore, LADIES is shown to have better generalization accuracy than original full-batch GCN, due to its stochastic nature. 
\end{abstract}

\section{Introduction}\label{sec:introduction}Graph convolutional networks (GCNs) recently proposed by Kipf et al.~\cite{gcn} adopt the concept of convolution filter into graph domain~\cite{bronstein2017geometric,DBLP:conf/icml/DaiDS16,DBLP:conf/nips/DefferrardBV16,hamilton2017representation}. For a given node, a GCN layer aggregates the embeddings of its neighbors from the previous layer, followed by a non-linear transformation, to obtain an updated contextualized node representation. Similar to the convolutional neural networks (CNNs) \cite{lecun1995convolutional} in the computer vision domain, by stacking multiple GCN layers, each node representation can utilize a wide receptive field from both its immediate and distant neighbors, which intuitively increases the model capacity.

Despite the success of GCNs in many graph-related applications~\cite{gcn, DBLP:conf/kdd/YingHCEHL18, DBLP:conf/esws/SchlichtkrullKB18}, training a deep GCN for large-scale graphs remains a big challenge. 
Unlike tokens in a paragraph or pixels in an image, which normally have limited length or size, graph data in practice can be extremely large. For example, Facebook social network in 2019 contains $2.7$ billion users\footnote{\url{https://zephoria.com/top-15-valuable-facebook-statistics/}}. Such a large-scale graph is impossible to be handled using full-batch GCN training, which takes all the nodes into one batch to update parameters. However, conducting mini-batch GCN training is non-trivial, as the nodes in a graph are closely coupled and correlated. In particular, in GCNs, the embedding of a given node depends recursively on all its neighbors' embeddings, and such computation dependency grows exponentially with respect to the number of layers. Therefore, training deep and large GCNs is still a challenge, which prevents their application to many large-scale graphs, such as social networks~\cite{gcn}, recommender systems~\cite{DBLP:conf/kdd/YingHCEHL18}, and knowledge graphs~\cite{DBLP:conf/esws/SchlichtkrullKB18}.

To alleviate the previous issues, researchers have proposed sampling-based methods to train GCNs based on mini-batch of nodes, which only aggregate the embeddings of a sampled subset of neighbors of each node in the mini-batch. Among them, one direction is to use a node-wise neighbor-sampling method. For example, GraphSAGE~\cite{graphsage} calculates each node embedding by leveraging only a fixed number of uniformly sampled neighbors.
Although this kind of approaches reduces the computation cost in each aggregation operation, the total cost can still be large. As is pointed out in~\cite{DBLP:conf/nips/Huang0RH18}, the recursive nature of node-wise sampling brings in redundancy for calculating embeddings. Even if two nodes share the same sampled neighbor, the embedding of this neighbor has to be calculated twice. Such redundant calculation will be exaggerated exponentially when the number of layers increases.
Following this line of research as well as reducing the computation redundancy, a series of work was proposed to reduce the 
size of sampled neighbors.
VR-GCN \cite{DBLP:conf/icml/ChenZS18} proposes to leverage variance reduction techniques to improve the sample complexity. 
Cluster-GCN \cite{chiang2019cluster}  considers restricting the sampled neighbors within some dense subgraphs, which are identified by a graph clustering algorithm before the training of GCN. However, these methods still cannot well address the issue of redundant computations, which may become worse when training very deep and large GCNs.
\begin{figure}[t]
\centering
\includegraphics[scale = 0.45, trim = 100 240 80 300, clip]{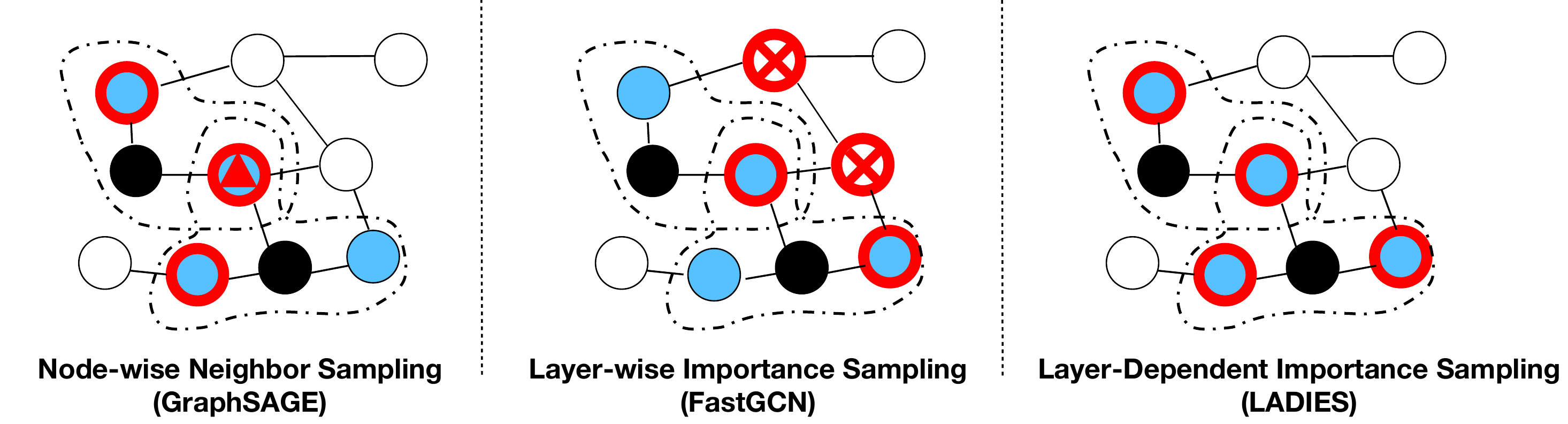}    
\caption{An illustration of the sampling process of GraphSage, FastGCN, and our proposed LADIES. Black nodes denote the nodes in the upper layer, blue nodes in the dashed circle are their neighbors, and node with the red frame is the sampled nodes. As is shown in the figure, GraphSAGE will redundantly sample a neighboring node twice, denoted by the red triangle, while FastGCN will sample nodes outside of the neighborhood. Our proposed LADIES can avoid these two problems.\label{fig:diag}}
\end{figure}

Another direction uses a layer-wise importance-sampling method. For example, FastGCN~\cite{fastgcn} calculates a sampling probability based on the degree of each node, and samples a fixed number of nodes for each layer accordingly. Then, it only uses the sampled nodes to build a much smaller sampled adjacency matrix, and thus the computation cost is reduced. Ideally, the sampling probability is calculated to reduce the estimation variance in FastGCN~\cite{fastgcn}, and guarantee fast and stable convergence. 
However, since the sampling probability is independent for each layer,
the sampled nodes from two consecutive layers are not necessarily connected. Therefore, the sampled adjacency matrix can be extremely sparse, and may even have all-zero rows, meaning some nodes are disconnected. 
Such a sparse connection problem incurs an inaccurate computation graph and further deteriorates the training and generalization performance of FastGCN. 


Based on the pros and cons of the aforementioned sampling approaches, we conclude that an ideal sampling method should have the following features: 1) \emph{layer-wise}, thus the neighbor nodes can be taken into account together to calculate next layers' embeddings without redundancy; 2) \emph{neighbor-dependent}, thus the sampled adjacency matrix is dense without losing much information for training; 3) the \emph{importance sampling} method should be adopted to reduce the sampling variance and accelerate convergence. To this end, we propose a new efficient sampling algorithm called \underline{LA}yer-\underline{D}ependent \underline{I}mportanc\underline{E}-\underline{S}ampling (LADIES) \footnote{We would like to clarify that the proposed layer-dependent importance sampling is different from ``layered importance sampling'' proposed in \cite{martino2017layered}. }, which fulfills all the above features. 

The procedure of LADIES is described as below:  (1) For each current layer ($l$), based on the nodes sampled in the upper layer ($l+1$), it picks all their neighbors in the graph, and constructs a bipartite graph among the nodes between the two layers. (2) Then it calculates the sampling probability according to the degree of nodes in the current layer, with the purpose to reduce sampling variance. (3) Next, it samples a fixed number of nodes based on this probability. 
(4) Finally, it constructs the sampled adjacency matrix between layers
and conducts training and inference, where
row-wise normalization is applied to all sampled adjacency matrices to stabilize training.
As illustrated in Figure \ref{fig:diag}, our proposed sampling strategy can avoid two pitfalls faced by existing two sampling strategies: layer-wise structure avoids exponential expansion of receptive field; layer-dependent importance sampling guarantees the sampled adjacency matrix to be dense such that the connectivity between nodes in two adjacent layers can be well maintained. 


We highlight our contributions as follows:
\begin{itemize}[leftmargin=*]

    \item We propose \underline{LA}yer-\underline{D}ependent \underline{I}mportanc\underline{E} \underline{S}ampling (LADIES) for training deep and large graph convolutional networks, which is built upon a novel layer-dependent sampling scheme to avoid exponential expansion of receptive field as well as guarantee the connectivity of the sampled adjacency matrix.
    
    \item  We prove theoretically that the proposed algorithm achieves significantly better memory and time complexities compared with node-wise sampling methods including GraphSage \cite{graphsage} and VR-GCN \cite{DBLP:conf/icml/ChenZS18}, 
    and has a dramatically smaller variance compared with FastGCN \cite{fastgcn}.
    
    \item We evaluate the performance of the proposed LADIES algorithm on benchmark datasets and demonstrate its superior performance in terms of both running time and test accuracy. Moreover, we show that LADIES achieves high accuracy with an extremely small sample size (e.g., 256 for a graph with 233k nodes), which enables the use of LADIES for training very deep and large GCNs. 

\end{itemize}


\section{Background}\label{sec:method}In this section, we review GCNs and several state-of-the-art sampling-based training algorithms, including full-batch, node-wise neighbor sampling methods, and layer-wise importance sampling methods. We summarize the notation used in this paper in Table \ref{tab:notation}.
\subsection{Existing GCN Training Algorithms}
\textbf{Full-batch GCN.}
When given an undirected graph $\cG$, with $\Pb$ defined in Table \ref{tab:notation}, the $l$-th GCN layer is defined as: 
\begin{align}\label{eq:exact_embedding}
\Zb^{(l)} = \Pb \Hb^{(l-1)}\Wb^{(l-1)},\quad \Hb^{(l-1)} = \sigma(\Zb^{(l-1)}),  
\end{align}
Considering a node-level downstream task, 
given training dataset $\{(\xb_i,y_i)\}_{v_i\in\cV_s}$, the weight matrices $\Wb^{(l)}$ will be learned by minimizing the loss function: $\cL = \frac{1}{|\cV_s|}\sum_{v_i\in \cV_s} \ell(y_i,\zb_i^{(L)})$,
where $\ell(\cdot,\cdot)$ is a specified loss function, $\zb_i^{L}$ denotes the output of GCN corresponding to the vertex $v_i$, and $\cV_S$ denotes the training node set. 
Gradient descent algorithm is utilized for the full-batch optimization, where the gradient is computed over all the nodes as $\frac{1}{|\cV_s|}\sum_{v_i\in \cV_S} \nabla\ell(y_i,\zb_i^{(L)})$. 



When the graph is large and dense, full-batch GCN's memory and time costs can be very expensive while computing such gradient, since during both backward and forward propagation process it requires to store and aggregate embeddings for all nodes across all layer. 
Also, since each epoch only updates parameters once, the convergence would be very slow. 

One solution to address issues of full-batch training is to sample a a mini-batch of labeled nodes $\cV_B\in\cV_S$, and compute the mini-batch stochastic gradient $\frac{1}{|\cV_B|}\sum_{v_i\in \cV_B} \nabla \ell(y_i,\zb_i^L)$. This can help reduce computation cost to some extent, but to calculate the representation of each output node, we still need to consider a large-receptive field, due to the dependency of the nodes in the graph. 

\textbf{Node-wise Neighbor Sampling Algorithms.}
GraphSAGE \cite{graphsage} proposed to reduce receptive field size by neighbor sampling (NS). For each node in $l$-th GCN layer, NS randomly samples $s_{node}$ of its neighbors at the $(l-1)$-th GCN layer 
and formulate an unbiased estimator $\hat{\Pb}^{(l-1)}\Hb^{(l-1)}$ to approximate $\Pb\Hb^{(l-1)}$ in graph convolution layer:
\begin{align}\label{eq:def_P_hat}
\hat{\Pb}^{(l-1)}_{i,j} = \left\{
 \begin{array}{ll}
        \frac{|\cN(v_i)|}{s_{node}}\Pb_{i,j},&v_j\in\hat{\cN}^{(l-1)}(v_i);\\
        0,& \text{otherwise}.
    \end{array}
\right.
\end{align}
where $\cN(v_i)$ and $\hat{\cN}^{(l-1)}(v_i)$ are the full and sampled neighbor sets of node $v_i$ for $(l-1)$-th GCN layer respectively. 


VR-GCN~\cite{DBLP:conf/icml/ChenZS18} is another neighbor sampling work. It proposed to utilize historical activations to reduce the variance of the estimator under the same sample strategy as GraphSAGE. Though successfully achieved comparable convergence rate, the memory complexity is higher and the time complexity remains the same. Though NS scheme alleviates the memory bottleneck of GCN, there exists redundant computation under NS since the embedding calculation for each node in the same layer is independent, thus the complexity grows exponentially when the number of layers increases.

\textbf{Layer-wise Importance Sampling Algorithms.}
FastGCN \cite{fastgcn} proposed a more advanced layer-wise importance sampling (IS) scheme aiming to solve the scalability issue as well as reducing the variance. IS conducts sampling for each layer with a degree-based sampling probability. The approximation of the $i$-th row of $(\Pb\Hb^{(l-1)})$ with $s_{layer}$ samples $v_{j_1},\ldots, v_{j_{s_l}}$ per layer can be estimated as:
\begin{align}\label{eq:fastGCN}
    ( \Pb\Hb^{(l-1)})_{i,*}\simeq \frac{1}{s_{layer}}\sum_{k=1,v_{j_k}\sim q(v)}^{s_{layer}} \Pb_{i,j_k}\Hb_{j_k,*}^{(l-1)}/q(v_{j_k})
\end{align}
where $q(v)$ is the distribution over $\cV$, $q(v_{j}) = \|\Pb_{*,j}\|_2^2/\|\Pb\|_F^2$ is the probability assigned to node $v_{j}$. 


Though IS outperforms uniform sampling and the layer-wise sampling successfully reduced both time and memory complexities, however, this sampling strategy has a major limitation: since the sampling operation is conducted independently at each layer, FastGCN cannot guarantee connectivity between sampled nodes at different layers, which incurs large variance of the approximate embeddings.
\begin{table*}[t]
\caption{Summary of Notations} 
\label{tab:notation}
\centering
\small
\begin{tabular}{l|p{10cm}}
\toprule
$\cG = (\cV,\Ab)$, $\|\Ab\|_0$ & $\cG$ denotes a graph consist of a set of nodes $\cV$ with node number $|\cV|$, $\Ab$ is the adjacency matrix, and $\|\Ab\|_0$ denotes the number of non-zero entries in $\Ab$. \\ \hline
$\Mb_{i,*}, \Mb_{*,j}$ $\Mb_{i,j}$& $\Mb_{i,*}$ is the i-th row of matrix M, $\Mb_{*,j}$ is the j-th column of matrix M, and $\Mb_{i,j}$ is the entry at the position $(i,j)$ of matrix M. \\ \hline
$\tilde{\Ab}, \tilde{\Db}, \Pb$ & $\tilde{\Ab} = \Ab + \Ib$, $\tilde{\Db}$ is a diagonal matrix satisfying  $\tilde{\Db}_{i,i} = \sum_j \tilde{\Ab}_{i,j}$, and \\ & $\Pb =  \tilde{\Db}^{-\frac{1}{2}}\tilde{\Ab}\tilde{\Db}^{-\frac{1}{2}}$ is the normalized Laplacian matrix. \\\hline
$l, \sigma(\cdot), \Hb^{(l)}, \Zb^{(l)}, \Wb^{(l)}$ &  $l$ denotes the GCN layer index, $\sigma(\cdot)$ denotes the activation function, $\Hb^{(l)} = \sigma(\Zb^{(l)})$ denotes the embedding matrix at layer $l$, $\Hb^{(0)}=\Xb$, $\Zb^{(l)} = \Pb\Hb^{(l-1)}\Wb^{(l-1)}$ is the intermediate embedding matrix, 
and $\Wb^{(l)}$ denotes the trainable weight matrix at layer $l$.\\ \hline
$L,K$ & $L$ is the total number of layers in GCN, and  $K$ is the dimension of embedding vectors (for simplicity, assume it is the same across all layers).\\ \hline
$b,s_{node},s_{layer}$ & For batch-wise sampling, $b$ denotes the batch size, $s_{node}$ is the number of sampled neighbors per node for node-wise sampling, and $s_{layer}$ is number of sampled nodes per layer for layer-wise sampling. 
\\ \bottomrule
\end{tabular}
\end{table*}
\begin{table*}[t]
\small
 \caption{Summary of Complexity and Variance \protect\footnotemark.
Here $\phi$ denotes the upper bound of the $\ell_2$ norm of embedding vector, $\Delta\phi$ denotes the bound of the norm of the difference between the embedding and its history, $D$ denotes the average degree, $b$ denotes the batch size, and $\bar V(b)$ denotes the average number of nodes which are connected to the nodes sampled in the training batch. }
\label{tab:complexity}
\centering
\begin{tabular}{llll}
\toprule
Methods & Memory Complexity & Time Complexity & Variance \\ \midrule
Full-Batch \cite{gcn}& $\mathcal{O}(L|\cV|K+LK^2)$ & $\mathcal{O}(L\|\Ab\|_0K+L|\cV|K^2)$ & $0$\\
GraphSage \cite{graphsage} & $\mathcal{O}(bKs_{node}^{L-1}+LK^2)$ &  $\mathcal{O}(bKs_{node}^{L}+bK^2s_{node}^{L-1})$ & $\mathcal{O}\big(D\phi \|\Pb\|_F^2/(|\cV|s_{node})\big)$ \\
VR-GCN \cite{DBLP:conf/icml/ChenZS18}& $\mathcal{O}(L|\cV|K+LK^2)$ & $\mathcal{O}(bDKs_{node}^{L-1}+bK^2s_{node}^{L-1})$ &  $\mathcal{O}\big(D\Delta\phi \|\Pb\|_F^2/(|\cV|s_{node})\big)$\\
FastGCN \cite{fastgcn}& $\mathcal{O}(LKs_{layer}+LK^2)$ & $\mathcal{O}(LKs_{layer}^2+LK^2s_{layer})$ & $\mathcal{O}(\phi \|\Pb\|_F^2/s_{layer})$ \\
\textbf{LADIES} & $\mathcal{O}(LKs_{layer}+LK^2)$ & $\mathcal{O}(LKs_{layer}^2+LK^2s_{layer})$ & $\mathcal{O}\big(\phi \|\Pb\|_F^2 \bar V(b)/(|\cV|s_{layer})\big)$ \\ \bottomrule
\end{tabular}
\end{table*}
\footnotetext{For simplicity, when evaluating the variance we only consider two-layer GCN.}

\subsection{Complexity and Variance Comparison}
We compare each method's memory, time complexity, and variance with that of our proposed LADIES algorithm in Table \ref{tab:complexity}.

\textbf{Complexity.}
We now compare the complexity of the proposed LADIES algorithm and existing sampling-based GCN training algorithms. The complexity for all methods are summarized in Table \ref{tab:complexity}, detailed calculation could be found in Appendix \ref{sec:complexity_comp}. Compare with full-batch GCN, the time and memory complexities of LADIES do not depend on the total number of nodes and edges, thus our algorithm does not have scalability issue on large and dense graphs. Unlike NS based methods including GraphSAGE and VR-GCN, LADIES is not sensitive to the number of layers and will not suffer exponential growth in complexity, therefore it can perform well when the neural network goes deeper. Compared to layer-wise importance sampling proposed in FastGCN, it maintains the same complexity while obtaining a better convergence guarantee as analyzed in the next paragraph. In fact, in order to guarantee good performance, our method requires a much smaller sample size than that of FastGCN, thus the time and memory burden is much lighter. Therefore, our proposed LADIES algorithm can achieve the best time and memory complexities and is applicable to training very deep and large graph convolution networks.



\textbf{Variance.} 
We compare the variance of our algorithm with existing sampling-based algorithms. To simplify the analysis, when evaluating the variance we only consider two-layer GCN. The results are summarized in Table \ref{tab:complexity}. We defer the detailed calculation to Appendix \ref{sec:variance_comp}. Compared with FastGCN, our variance result is strictly better since $\bar V(b)\le |\cV|$, where $\bar V(b)$ denotes the number of nodes which are connected to the nodes sampled in the training batch. Moreover, $\bar V(b)$ scales with $b$, which implies that our method can be even better when using a small batch size. Compared with node-wise sampling, consider the same sample size, i.e., $s_{layer} = bs_{node}$. Ignoring the same factors, the variance of LADIES is in the order of $\cO(\bar V(b)/b)$ and the variance of GraphSAGE is $\cO(D)$, where $D$ denotes the average degree. Based on the definition of $\bar V(b)$, we strictly have $\bar V(b)\le \cO(bD)$ since there is no redundant node been calculated in $\bar V(b)$. Therefore our method is also strictly better than GraphSAGE especially when the graph is dense, i.e., many neighbors can be shared. The variance of VR-GCN resembles that of GraphSAGE but relies on the difference between the embedding and its history, which is not directly comparable to our results.


\section{LADIES: \underline{LA}yer-\underline{D}ependent \underline{I}mportanc\underline{E} \underline{S}ampling}
We present our method, LADIES, in this section. As illustrated in previous sections, 
for node-wise sampling methods \citep{graphsage, DBLP:conf/icml/ChenZS18}, one has to sample a certain number of nodes in the neighbor set of all sampled nodes in the current layer, then the number of nodes that are selected to build the computation graph is exponentially large with the number of hidden layers, which further slows down the training process of GCNs. For the existing layer-wise sampling scheme \citep{fastgcn}, 
it is inefficient when the graph is sparse, since some nodes may have no neighbor been sampled during the sampling process, which results in meaningless zero activations \citep{DBLP:conf/icml/ChenZS18}.

To address the aforementioned drawbacks and weaknesses of existing sampling-based methods for GCN training, we propose our training algorithms that can achieve good convergence performance as well as reduce sampling complexity. In the following, we are going to illustrate our method in detail.

\subsection{Revisiting Independent Layer-wise Sampling}
We first revisit the independent layer-wise sampling scheme for building the computation graph of GCN training. 
Recall in the forward process of GCN (\ref{eq:exact_embedding}), the matrix production $\Pb\Hb^{(l-1)}$ can be regarded as the embedding aggregation process. Then, the layer-wise sampling methods aim to approximate the intermediate embedding matrix $\Zb^{(l)}$ by only sampling a subset of nodes at the $(l-1)$-th layer and aggregating their embeddings for approximately estimating the embeddings at the $l$-th layer. Mathematically, similar to \eqref{eq:fastGCN}, let $\cS_{(l-1)}$ with $|\cS_{l-1}|=s_{l-1}$ be the set of sampled nodes at the $(l-1)$-th layer, we can approximate $\Pb\Hb^{(l-1)}$ as
\begin{align*}
\Pb\Hb^{(l-1)} \simeq \frac{1}{s_{l-1}}  \sum_{k\in\cS^{(l-1)}}\frac{1}{p_k^{(l-1)}}\Pb_{*,k}\Hb^{(l-1)}_{k,*},
\end{align*}
where we adopt non-uniformly sampling scheme by assigning probabilities $p_1^{(l-1)},\dots,p_{|\cV|}^{(l-1)}$ to all nodes in $\cV$. 
Then the corresponding discount weights are $\{1/(s_{l-1}p_i^{(l-1)})\}_{i=1,\dots,|\cV|}$. Then let $\{i_k^{(l-1)}\}_{k\in\cS_{l-1}}$ be the indices of sampled nodes at the $l-1$-th layer, the estimator of $\Pb\Hb^{(l-1)}$ can be formulated as
\begin{align}\label{eq:approximate_embedding1}
\Pb\Hb^{(l-1)} \simeq \Pb\Sbb^{(l-1)}\Hb^{(l-1)},
\end{align}
where $\Sbb^{(l-1)}\in\RR^{|\cV|\times|\cV|}$ is a diagonal matrix with only nonzero diagnoal matrix,  defined by
\begin{align}\label{eq:def_S}
\Sbb^{(l-1)}_{s,s} = \left\{
 \begin{array}{ll}
        \frac{1}{s_{l-1}p_{i_{k}^{(l-1)}}^{(l-1)}},&s= i_{k}^{(l-1)};\\
        0,& \text{otherwise}.
    \end{array}
\right.
\end{align}

 It can be verified that $\|\Sbb^{(l-1)}\|_0 = s_{l-1}$ and $\EE[\Sbb^{(l-1)}] = \Ib$. 
Assuming at the $l$-th and $l-1$-th layers the sets of sampled nodes are determined. Then let $ \{i_k^{(l)}\}_{k\in\cS_{l}}$ and $ \{i_k^{(l-1)}\}_{k\in\cS_{l-1}}$ denote the indices of sampled nodes at these two layers, and define the row selection matrix $\Qb^{(l)}\in\RR^{s_l\times |\cV|}$ as:
\begin{align}\label{eq:def_Q}
\Qb^{(l)}_{k,s} = \left\{
 \begin{array}{ll}
    1 & (k, s) = (k, i_k^{(l)});\\
    0,& \text{otherwise},
    \end{array}
\right.
\end{align}
the forward process of GCN with layer-wise sampling can be approximated by
\begin{align}\label{eq:approx_forward}
\tilde\Zb^{(l)} = \tilde \Pb^{(l-1)}\tilde \Hb^{(l-1)}\Wb^{(l-1)}, \quad \tilde \Hb^{(l-1)} = \sigma(\tilde \Zb^{(l-1)}),
\end{align}
where $\tilde \Zb^{(l)}\in\RR^{s_l\times d}$ denotes the approximated intermediate embeddings for sampled nodes at the $l$-th layer and $\tilde \Pb^{(l-1)} = \Qb^{(l)}\Pb\Sbb^{(l-1)}\Qb^{(l-1)\top}\in\RR^{s_l\times s_{l-1}}$ serves as a modified Laplacian matrix, and can be also regarded as a sampled bipartite graph after certain rescaling. Since typically we have $s_l,s_{l-1}\ll|\cV|$, the sampled graph is dramatically smaller than the entire one, thus the computation cost can be significantly reduced.

\subsection{Layer-dependent Importance Sampling} 
However, independently conducting layer-wise sampling at different layers is not efficient since the sampled bipartite graph may still be sparse and even have all-zero rows. This further results in very poor performance and require us to sample lots of nodes in order to guarantee convergence throughout the GCN training. To alleviate this issue, we propose to apply neighbor-dependent sampling that can leverage the dependency between layers which further leads to dense computation graph. 
Specifically, our layer-dependent sampling mechanism is designed in a top-down manner, i.e., the sampled nodes at the $l$-th layer are generated depending on the sampled nodes that have been generated in all upper layers. 
Note that for each node we only need to aggregate the embeddings from its neighbor nodes in the previous layer. 
Thus, at one particular layer, we only need to generate samples from the union of neighbors of the nodes we have sampled in the upper layer, which is defined by 
\begin{align*}
\cV^{(l-1)} = \cup_{v_i\in\cS_l}\cN(v_i)
\end{align*}
where $\cS_l$ is the set of nodes we have sampled at the $l$-th layer and $\cN(v_i)$ denotes the neighbors set of node $v_i$.  
Therefore during the sampling process, we only assign probability to nodes in $\cV^{(l-1)}$, denoted by $\{p_{i}^{(l-1)}\}_{v_i\in\cV^{(l-1)}}$. Similar to FastGCN \cite{fastgcn}, we apply importance sampling to reduce the variance. However, we have no information regarding the activation matrix $\Hb^{(l-1)}$ when characterizing the samples at the $(l-1)$-th layer. Therefore, we resort to a important sampling scheme which only relies on the matrices $\Qb^{(l)}$ and $\Pb$. Specifically, we define the importance probabilities as:
\begin{align}\label{eq:importance_prob}
p_{i}^{(l-1)} = \frac{\|\Qb^{(l)}\Pb_{*,i}\|_2^2}{\|\Qb^{(l)}\Pb\|_F^2}.
\end{align}
Evidently, if $v_i\notin \cV^{(l-1)}$, we have $\|\Qb^{(l)}\Pb_{*,i}\|_2^2=0$, which implies that $p_i^{(l-1)} = 0$. 
Then, let  $\{i_k^{(l-1)}\}_{v_k\in\cS_{l-1}}$ be the indices of sampled nodes at the $(l-1)$-th layer based on the importance probabilities computed in \eqref{eq:importance_prob}, we can also define the random diagonal matrix $\Sbb^{(l-1)}$ according to \eqref{eq:def_S}, and formulate the same forward process of GCN as in \eqref{eq:approx_forward} but
with a different modified Laplacian matrix $\tilde \Pb^{(l-1)} = \Qb^{(l)}\Pb\Sbb^{(l-1)}\Qb^{(l-1)\top}\in\RR^{s_l\times s_{l-1}}$. The computation of $\tilde \Pb^{(l-1)}$ can be very efficient since it only involves sparse matrix productions.
Here the major difference between our sampling method and independent layer-wise sampling is the different constructions of matrix $\Sbb^{(l-1)}$. In our sampling mechanism, we have $\EE[\Sbb^{(l-1)}] = \Lb^{(l-1)}$, where $\Lb^{(l-1)}$ is a diagonal matrix with
\begin{align}\label{eq:def_L}
\Lb^{(l-1)}_{s,s} = \left\{
 \begin{array}{ll}
    1 & s\in\cV^{(l-1)}\\
    0,& \text{otherwise}.
    \end{array}
\right.
\end{align}
Since $\|\Lb^{(l-1)}\|_0 = |\cV^{(l-1)}|\ll |\cV|$, applying independent sampling method results in the fact that many nodes are in the set $\cV/\cV^{(l-1)}$, which has no contribution to the construction of computation graph. In contrast,
we only sample nodes from $\cV^{(l-1)}$ which can guarantee more connections between the sampled nodes at $l$-th and $(l-1)$-th layers, and further leads to a dense computation graph between these two layers.

\subsection{Normalization}
Note that for original GCN, the Laplacian matrix $\Pb$ is obtained by normalizing the matrix $\Ib + \Ab$. Such normalization operation is crucial since it is able to maintain the scale of embeddings in the forward process and avoid exploding/vanishing gradient. However, the modified Laplacian matrix $\{\tilde \Pb^{(l)}\}_{l=1,\dots,L}$  may not be able to achieve this, especially when $L$ is large, because its maximum singular values can be very large without sufficient samples. Therefore, motivated by \cite{gcn}, we propose to normalize $\tilde \Pb^{(l)}$ such that the sum of all rows are $1$, i.e., we have
\begin{align*}
\tilde \Pb^{(l)} \leftarrow \Db_{\tilde \Pb^{(l)}}^{-1}\tilde \Pb^{(l)},
\end{align*}
where $\Db_{\tilde \Pb^{(l)}}\in\RR^{s_{l+1}\times s_{l+1}}$ is a diagonal matrix with each diagonal entry to be the sum of the corresponding row in $\tilde \Pb^{(l)}$. Now, we can leverage the modified Laplacian matrices $\{\tilde \Pb\}_{l=1,\dots,L}$ to build the whole computation graph. We formally summarize the proposed algorithm in Algorithm \ref{alg:Framwork}.

\begin{algorithm}[tb] 
\caption{Sampling Procedure of LADIES} 
\label{alg:Framwork} 
\begin{algorithmic}[1] 
\REQUIRE
Normalized Laplacian Matrix $\Pb$; 
Batch Size $b$, Sample Number $n$;
\STATE Randomly sample a batch of $b$ output nodes as $\Qb^{L}$
\FOR {$l=L$ to $1$}
    \STATE  Get layer-dependent laplacian matrix $\Qb^{(l)}\Pb$. Calculate sampling probability for each node using $p_{i}^{(l-1)} \gets \frac{\|\Qb^{(l)}\Pb_{*,i}\|_2^2}{\|\Qb^{(l)}\Pb\|_F^2}$, and organize them into a random diagonal matrix $S^{(l-1)}$.
    \STATE  Sample $n$ nodes in $l-1$ layer using $p^{(l-1)}$. The sampled nodes formulate $\Qb^{(l-1)}$
    \STATE Reconstruct sampled laplacian matrix between sampled nodes in layer $l-1$ and $l$ by \\$\tilde \Pb^{(l-1)} \gets \Qb^{(l)}\Pb\Sbb^{(l-1)}\Qb^{(l-1)\top}$, then normalize it by $\tilde \Pb^{(l)} \gets \Db_{\tilde \Pb^{(l)}}^{-1}\tilde \Pb^{(l)}$.
\ENDFOR
\RETURN Modified Laplacian Matrices $\{\tilde \Pb^{(l)}\}_{l=1,\dots,L}$ and Sampled Node at Input Layer $\Qb^{0}$; 
\end{algorithmic} 
\end{algorithm}

\section{Experiments}\label{sec:evaluation}In this section, we conduct experiments to evaluate LADIES for training deep GCNs on different node classification datasets, including Cora, Citeseer, Pubmed \cite{sen2008collective} and Reddit \cite{graphsage}. 

\begin{table*}[t]
\small
\centering
\label{tab:simulate} 
\caption{Comparison of LADIES with original GCN  (Full-Batch), GraphSage (Neighborhood Sampling) and FastGCN (Important Sampling), in terms of accuracy, time, memory and convergence. Training 5-layer GCNs on different node classification datasets (node number is below the dataset name). Results show that LADIES can achieve the best accuracy with lower time and memory cost.} 
\vspace{.1in}
\begin{tabular}{p{0.94cm}<{\centering}|p{2.2cm}<{\centering}|p{1.7cm}<{\centering}|p{1.7cm}<{\centering}|p{1.1cm}<{\centering}|p{2.1cm}<{\centering}|p{1.6cm}<{\centering}}
\toprule
\multicolumn{1}{l|}{Dataset}  & Sample Method & F1-Score(\%) & Total Time(s) & Mem(MB) & Batch Time(ms) & Batch Num  \\ \midrule
                           
\multirow{6}{*}{\begin{tabular}[l]{@{}c@{}}Cora\\ (2708)\end{tabular}}         & Full-Batch         &$76.5 \pm 1.4$& $1.19 \pm 0.82$  & $30.72$  & $15.75 \pm 0.52$  & $80.8 \pm 51.7$   \\       
                              & GraphSage (5)           &$75.2 \pm 1.5$& $6.77 \pm 4.94$  & $471.39$  & $78.42 \pm 0.87$  & 65.2 $\pm$ 52.1  \\   
                              & FastGCN (64)    &$25.1 \pm 8.4$& $0.55 \pm 0.65$  & 3.13  & $9.22 \pm 0.20$  & $63.2 \pm 71.2$   \\ 
                              & FastGCN (512)    &$78.0 \pm 2.1$& $4.70 \pm 1.35$  & 7.33  & $10.08 \pm 0.29$  & $487 \pm 147$   \\ 
                              & LADIES (64)    &77.6 $\pm$1.4& 4.19 $\pm$ 1.16  & \textbf{3.13}  & 9.68 $\pm$ 0.48  & $436 \pm 118.4$   \\
                              & LADIES (512)    &\textbf{78.3} $\pm$\textbf{1.6}& \textbf{0.72} $\pm$ \textbf{0.39}  & $7.35$  & 9.77 $\pm$ 0.28  & $75.6 \pm 37.0$   \\ \midrule
                                        
\multirow{7}{*}{\begin{tabular}[l]{@{}c@{}}Citeseer\\ (3327)\end{tabular}}     & Full-Batch         &$62.3 \pm 3.1$& $0.61 \pm 0.70$  & $68.13$  & $15.77 \pm 0.58$  & $40.6 \pm 22.8$   \\    
                              & GraphSage (5)     &$59.4 \pm 0.9$& $4.51 \pm 3.68$  & $595.71$  & $53.14 \pm 1.90$  & $57.2 \pm 42.1$   \\  
                              & FastGCN (64)       &$19.2 \pm 2.7$& $0.53 \pm 0.48$  & $5.89$  & $8.88 \pm 0.40$  & $64.0 \pm 57.0$   \\  
                              & FastGCN (512)       &$44.6 \pm 10.8$& $4.34 \pm 1.73$  & $13.97$  & $10.41 \pm 0.51$  & $386 \pm 167$   \\  
                              & FastGCN (1024)       &$63.5 \pm 1.8$& $2.24 \pm 1.01$  & $23.24$  & $10.54 \pm 0.27$  & $223 \pm 98.6$   \\  
                              & LADIES (64)        &\textbf{65.0} $\pm$ .\textbf{1.4}& 2.17 $\pm$ 0.65  & \textbf{5.89}  & 9.60 $\pm$ 0.39  & 232 $\pm$ 66.8   \\ 
                              & LADIES (512)        &64.3 $\pm$ 2.4& \textbf{0.41} $\pm$ \textbf{0.22}  & 13.92  & 10.32 $\pm$ 0.23  & 37.6 $\pm$ 11.9   \\  \midrule

\multirow{7}{*}{\begin{tabular}[l]{@{}c@{}}Pubmed\\ (19717)\end{tabular}}       & Full-Batch         &$71.9 \pm 1.9$& $4.80 \pm 1.53$  & $137.93$  & $44.69 \pm 0.57$  & $102 \pm 33.4$   \\    
                              & GraphSage (5)           &$70.1 \pm 1.4$& $5.53 \pm 2.57$  & $453.58$  & $44.73 \pm 0.30$  & $74.8 \pm 31.7$   \\    
                              & FastGCN (64)       &$38.5 \pm 6.9$& $0.40 \pm 0.69$  & $1.92$  & $7.42 \pm 0.16$  & $58.8 \pm 94.8$   \\  
                              & FastGCN (512)         &$39.3 \pm 9.2$& \textbf{0.44} $\pm$ \textbf{0.61}  & $4.53$  & $10.06 \pm 0.41$  & $44.8 \pm 55.0$   \\ 
                              & FastGCN (8192)         &$74.4 \pm 0.8$& $3.47 \pm 1.16$  & $49.41$  & $17.84 \pm 0.33$  & $195 \pm 56.9$   \\ 
                              & LADIES (64)         &\textbf{76.8} $\pm$ \textbf{0.8}& $2.57 \pm 0.72$  & $\textbf{1.92}$  & $9.43 \pm 0.47$  & $277 \pm 82.2$   \\
                              & LADIES (512)         &75.9 $\pm$ 1.1& $2.27 \pm 1.17$  & $4.39$  & $10.43 \pm 0.36$  & $245 \pm 84.5$   \\    \midrule

\multirow{7}{*}{\begin{tabular}[l]{@{}c@{}}Reddit\\ (232965)\end{tabular}}       & Full-Batch         &$91.6 \pm 1.6$& $474.3 \pm 84.4$  & $2370.48$  & $1564 \pm 3.41$  & $179 \pm 75.5$   \\      
                              & GraphSage (5)           &$92.1 \pm 1.1$& $13.12 \pm 2.84$  & $1234.63$  & $121.47 \pm 0.72$  & $81.5 \pm 42.3$   \\   & FastGCN (64)         &$27.8 \pm 12.6$& 2.06 $\pm$ 1.29 & $3.75$  & $7.85 \pm 0.72$  & $57.4 \pm 43.7$   \\  
                              & FastGCN (512)         &$17.5 \pm 16.7$& \textbf{0.31} $\pm$ \textbf{0.41 } & $6.91$  & $10.01 \pm 0.31$  & $32.1 \pm 72.3$   \\ 
                              & FastGCN (8192)         &$89.5 \pm 1.2$& $5.63 \pm 2.12$  & $74.28$  & $16.57 \pm 0.58$  & $278 \pm 51.2$   \\ 
                              & LADIES (64)         &83.5 $\pm$ 0.9& $5.62 \pm 1.58$  & \textbf{3.75}  & $9.42 \pm 0.48$  & $453 \pm 88.2$   \\
                              & LADIES (512)         &\textbf{92.8} $\pm$ \textbf{1.6}& $6.87 \pm 1.17$  & $7.26$  & $10.87 \pm 0.63$  & $393 \pm 74.4$   \\ \bottomrule          
\end{tabular}
\end{table*}

\subsection{Experiment Settings}
We compare LADIES with the original GCN (full-batch)
, GraphSage (neighborhood sampling)
and FastGCN (important sampling).
We modified the PyTorch implementation of GCN~\footnote{\url{https://github.com/tkipf/pygcn}} to add our LADIES sampling mechanism. To make the fair comparison only on the sampling part, we also choose the online PyTorch implementation of all these baselines released by their authors and use the same training code for all the methods. By default, we train 5-layer GCNs with hidden state dimension as 256, using the four methods. We choose 5 neighbors to be sampled for GraphSage, 64 and 512 nodes to be sampled for both FastGCN and LADIES per layer. We update the model with a batch size of 512 and ADAM optimizer with a learning rate of 0.001.

For all the methods and datasets, we conduct training for 10 times and take the mean and variance of the evaluation results. Each time we stop training when the validation accuracy doesn't increase a threshold (0.01) for 200 batches, and choose the model with the highest validation accuracy as the convergence point. We use the following metrics to evaluate the effectiveness of sampling methods:
\begin{itemize}[leftmargin=*]
\item \textbf{Accuracy}: The micro F1-score of the test data at the convergence point. We calculate it using the full-batch version to get the most accurate inference (only care about training).
\item \textbf{Total Running Time (s)}: The total training time (exclude validation) before convergence point.
\item \textbf{Memory (MB)}: Total memory costs of model parameters and all hidden representations of a batch. 
\item \textbf{Batch Time and Num}: Time cost to run a batch and the total batch number before convergence.
\end{itemize}

\subsection{Experiment Results}

As is shown in Table~\ref{tab:simulate}, our proposed LADIES can achieve the highest accuracy score among all the methods, using a small sampling number. One surprising thing is that the sampling-based method can achieve higher accuracy than the Full-Batch version, and in some cases using a smaller sampling number can lead to better accuracy (though it may take longer batches to converge). This is probably because the graph data is incomplete and noisy, and the stochastic nature of the sampling method can bring in regularization for training a more robust GCN with better generalization accuracy~\cite{DBLP:conf/icml/HardtRS16}. Another observation is that no matter the total size of the graph, LADIES with a small sample number (64) can still converge well, and sometimes even better than a larger sample number (512). This indicates that LADIES is scalable to training very large and deep GCN while maintaining high accuracy.

As a comparison, FastGCN with 64 and 512 sampled nodes can lead to similar accuracy for small graphs (Cora). But for bigger graphs as Citeseer, Pubmed, and Reddit, it cannot converge to a good point, partly because of the computation graph sparsity issue. For a fair comparison, we choose a higher sampling number for FastGCN on these big graphs. For example, in Reddit, we choose 8192 nodes to be sampled, and FastGCN in such cases can converge to a similar accurate result compared with LADIES, but obviously taking more memory and time cost. GraphSage with 5 nodes to be sampled takes far more memory and time cost because of the redundancy problem we've discussed, and its uniform sampling makes it fail to converge well and fast compared to importance sampling. 

In addition to the above comparison, we show that our proposed LADIES can converge pretty well with a much fewer sampling number. As is shown in Figure~\ref{da}, when we choose the sampling number as small as 16, the algorithm can already converge to the best result, with low time and memory cost. This implies that although in Table~\ref{tab:simulate}, we choose sample number as 64 and 512 for a fair comparison, but actually, the performance of our method can be further enhanced with a smaller sampling number.

Furthermore, we show that the stochastic nature of LADIES can help to achieve better generalization accuracy than original full-batch GCN. We plot the F1-score of both full-batch GCN and LADIES on the PubMed dataset for 300 epochs without early stop in Figure \ref{fig:addition_exp}. From Figure \ref{fig:subfig:1.a}, we can see that, full-batch GCN can achieve higher F1-Score than LADIES on the training set. Nevertheless, on the validation and test datasets, we can see from Figures \ref{fig:subfig:1.b} and \ref{fig:subfig:1.c} that LADIES can achieve significantly higher F1-Score than full-batch GCN. This suggests that LADIES has better generalization performance than full-batch GCN. The reason is: real graphs are often noisy and incomplete. Full-batch GCN uses the entire graph in the training phase, which can cause overfitting to the noise. In sharp contrast, LADIES employs stochastic sampling to use partial information of the graph and therefore can mitigate the noise of the graph and avoid overfitting to the training data. At a high-level, the sampling scheme adopted in LADIES shares a similar spirit as bagging and bootstrap~\cite{DBLP:journals/ml/Breiman96b}, which is known to improve the generalization performance of machine learning predictors.

\begin{figure*}[!t]
    \centering
    \includegraphics[width=0.99\textwidth, trim = 0 5 0 0, clip]{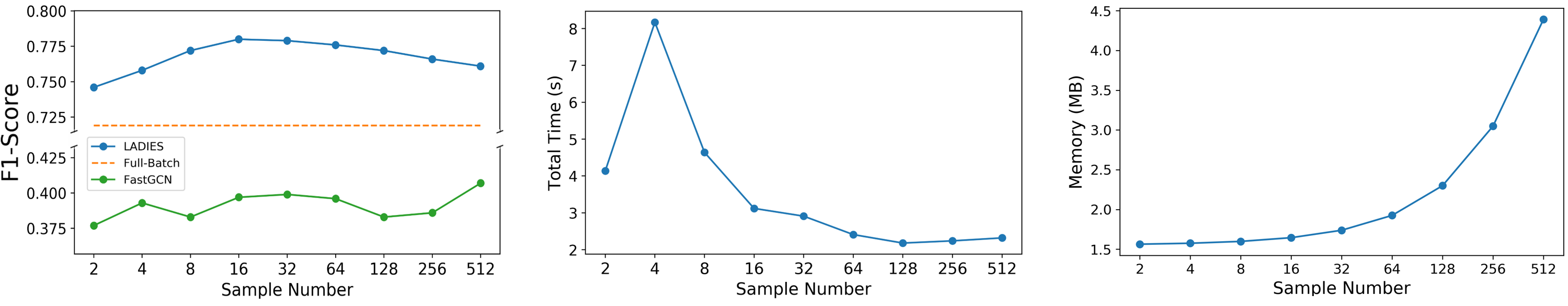}
    \caption{F1-score, total time and memory cost at convergence point for training PubMed, when we choose different sampling numbers of our method. Results show that LADIES can achieve good generalization accuracy (F1-score = 77.6) even with a small sampling number as 16, while FastGCN cannot converge (only reach F1-score = 39.3) with a large sampling number as 512.}\label{da}
\end{figure*}

\begin{figure}[t!]
  \centering
    \subfigure[Training (60 data samples)]{
    \includegraphics[width=0.32\linewidth]{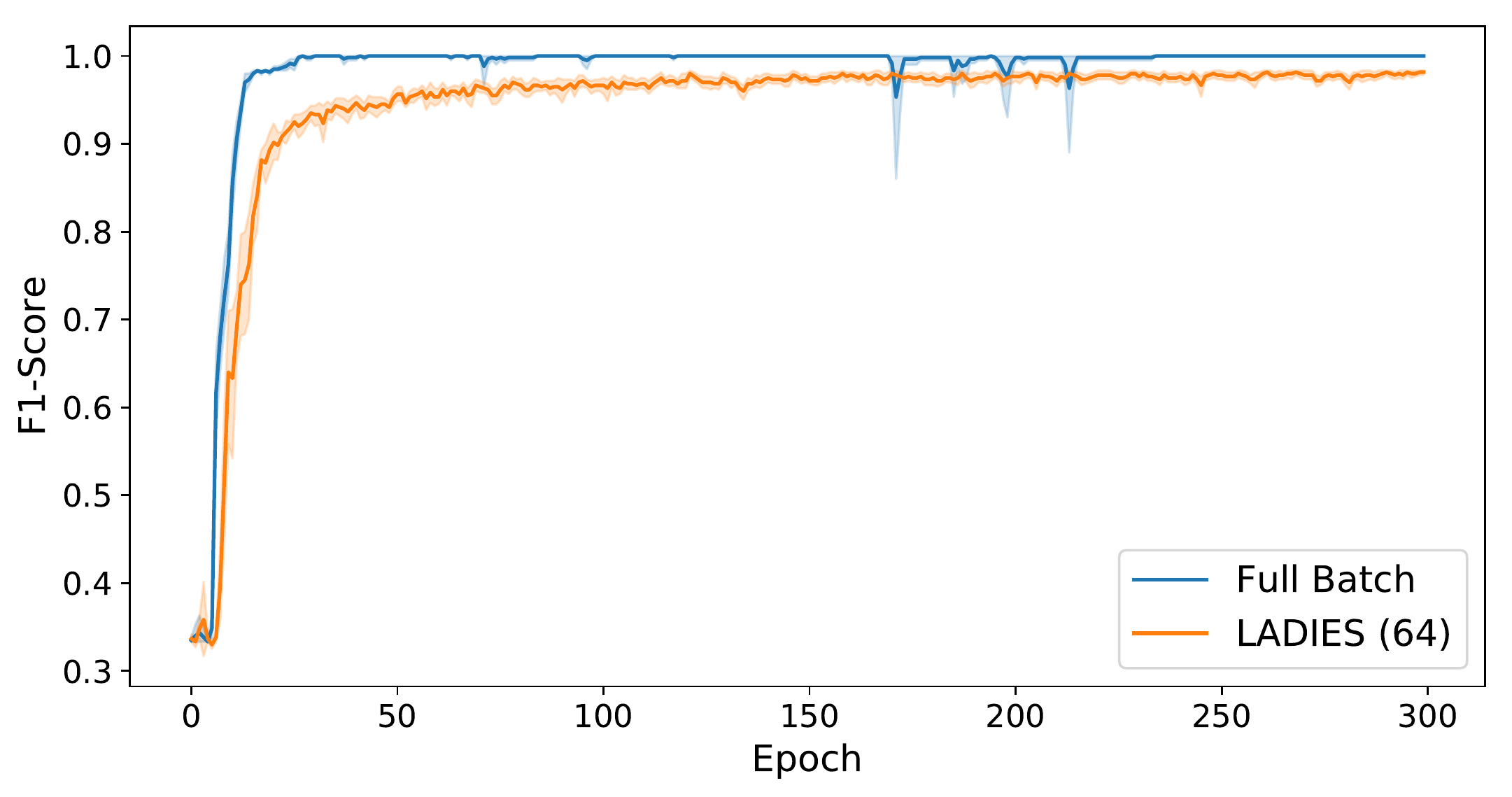}
    \label{fig:subfig:1.a}}
    \subfigure[Validation (500 data samples)]{
    \label{fig:subfig:1.b}
    \includegraphics[width=0.32\linewidth]{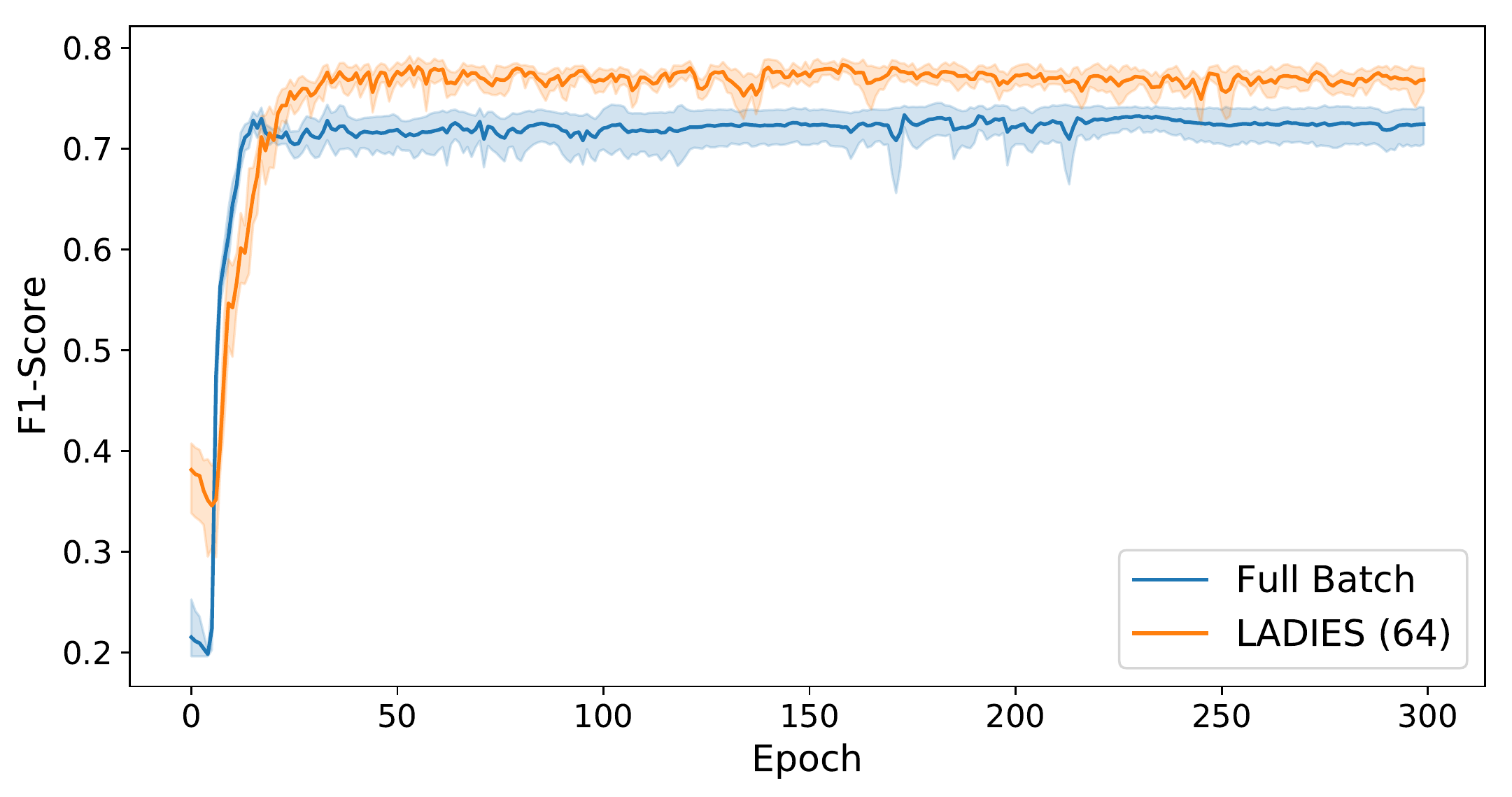}}
    \subfigure[Testing (1000 data samples)]{
    \label{fig:subfig:1.c}
    \includegraphics[width=0.32\linewidth]{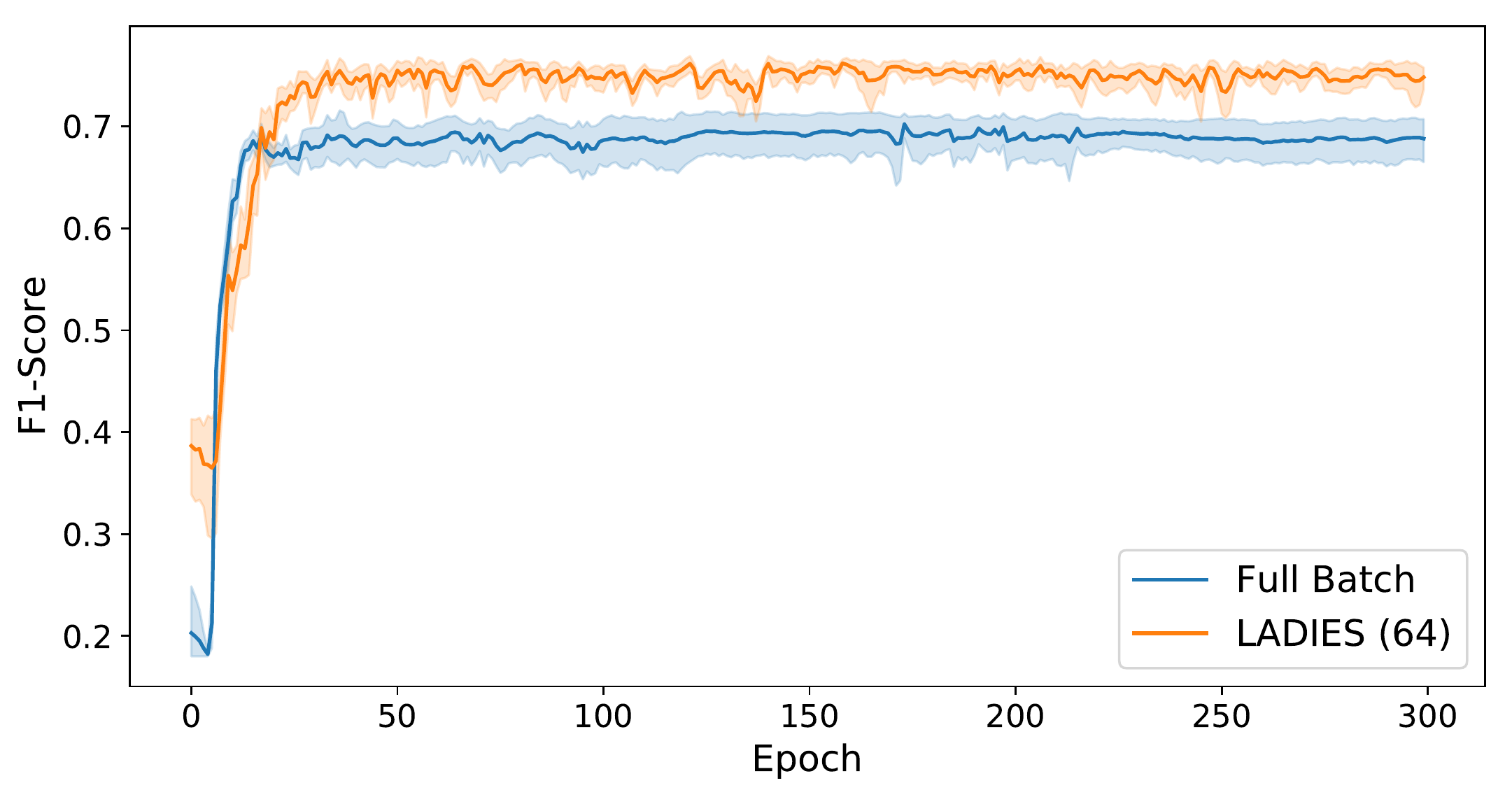}}
    \caption{Experiments on the PubMed dataset. We plot the F1-score of both full-batch GCN and LADIES every epoch on (a) Training dataset (b) Validation dataset and (c) Testing dataset. \label{fig:addition_exp}} 
\end{figure}

\section{Conclusions}
We propose a new algorithm namely LADIES for training deep and large GCNs. The crucial ingredient of our algorithm is layer-dependent importance sampling, which can both ensure dense computation graph as well as avoid drastic expansion of receptive field. Theoretically, we show that LADIES enjoys significantly lower memory cost, time complexity and estimation variance, compared with existing GCN training methods including GraphSAGE and FastGCN. Experimental results demonstrate that LADIES can achieve the best test accuracy with much lower computational time and memory cost on benchmark datasets.


\section*{Acknowledgement}
We would like to thank the anonymous reviewers for their helpful comments. D. Zou and Q. Gu were partially supported by the NSF BIGDATA IIS-1855099, NSF CAREER Award IIS-1906169 and Salesforce Deep Learning Research Award. Z. Hu, Y. Wang, S. Jiang and Y. Sun were partially supported by NSF III-1705169, NSF 1937599, NSF CAREER Award 1741634, and Amazon Research Award.  We also thank AWS for providing cloud computing credits associated with the NSF BIGDATA award.  The views and conclusions contained in this paper are those of the authors and should not be interpreted as representing any funding agencies.

\bibliographystyle{plain}
\bibliography{gcn}

\newpage
\appendix

\section{Complexity comparison between different algorithms} \label{sec:complexity_comp}
We illustrate how to compute memory and time complexity in this part. The notations remains the same as in table \ref{tab:notation} and \ref{tab:complexity}.

The memory requirement for all the mentioned methods consists of two parts, including for storing the intermediate embedding matrices and weight matrices. The first part varies for different algorithms, but the second part is always $\mathcal{O}(LK^2)$ since it has to store $L$ weight matrices with dimension $K\times K$.

For computation cost, it mainly consists of two steps, including the feature propagation and feature transformation. Feature propagation corresponds to the neighbors' embeddings aggregation operation while updating the activations, and the feature transformation corresponds to the linear transformation associated with learnable weight matrix $\Wb^{(l)}$.

For full-batch GCN, it stores the intermediate embedding matrices for all layers, since it has $L$ layers, each layer has $|\cV|$ nodes with dimension of $K$, thus its memory complexity for embedding matrices storage is $\mathcal{O}(L|\cV|K)$, combining with weight matrices' memory requirement $\mathcal{O}(LK^2)$, the memory complexity for full-batch GCN is $\mathcal{O}(L|\cV|K+LK^2)$ in total. For its time complexity, the feature propagation operation corresponds to sparse-dense matrix multiplication $\Ub = \Pb \Hb^{(l-1)}$, which leads to time complexity $\mathcal{O}(\|\Ab\|_0K)$. The feature transformation operation corresponds to dense-dense matrix multiplication $\Ub \Wb^{l-1}$, which leads to time complexity $\mathcal{O}(|\cV|K^2)$. Thus, the per-batch time complexity for a $L$-layer network is $\mathcal{O}(L\|\Ab\|_0K+L|\cV|K^2)$ in total.

Now we analyze the complexity of node-wise sampling methods. For GraphSAGE, with batch-size $b$, memory requirement for storing the intermediate embedding matrices is $\mathcal{O}(bKs_{node}^{L-1})$, since it has $\mathcal{O}(bs_{node}^{L-1})$ $K$-dimension embedding aggregation operation. This leads to total memory complexity $\mathcal{O}(bKs_{node}^{L-1}+LK^2)$. For time complexity, for each batch of b nodes, we need to update $\mathcal{O}(s_{node}^{L-1})$ activations for one node. Each new activation needs to aggregate $s_{node}$ embeddings in previous layers, so for one batch, the computation cost for feature propagation is $O(bKs_{node}^{L})$. For feature transformation, since each new activation has a vector-matrix multiplication operation after the aggregation, thus the computation cost would be $\mathcal{O}(bK^2s_{node}^{L-1})$ for this part. Therefore, we know the total per-batch time complexity for GraphSAGE is $\mathcal{O}(bKs_{node}^{L}+bK^2s_{node}^{L-1})$.

For another node-wise sampling strategy VR-GCN, it needs to store all historical activations, so its storage requirement for embedding matrices is $\mathcal{O}(L|\cV|K)$, therefore the total memory is $\mathcal{O}(L|\cV|K+LK^2)$. The time complexity is the same as GraphSAGE, which is $\mathcal{O}(bDKs_{node}^{L-1}+bK^2s_{node}^{L-1})$.

For layer-wise sampling method FastGCN, for embedding matrices storage, it only has to store $b$ embedding vectors in the last layer, and $(L-1)s_{layer}$ embedding vectors in the previous $L-1$ layers, so the total memory complexity for this part is $\mathcal{O}(bK+(L-1)Ks_{layer}))$. Also, it is easy to see that time complexity would be $\mathcal{O}(bKs_{layer}+(L-1)Ks_{layer}^2)$ for feature propagation, and $\mathcal{O}((b+(L-1)s_{layer})K^2)$ for feature transformation. Usually, we have $b<s_{layer}$, so after discarding relatively small terms, the memory complexity in total would be $\mathcal{O}(LKs_{layer}+LK^2)$, and time complexity in total is $\mathcal{O}(LKs_{layer}^2+LK^2s_{layer})$.

Then, for our proposed LADIES algorithm, since its also a layer-wise sampling method, its memory complexity and time complexity are $\mathcal{O}(LKs_{layer}+LK^2)$ and $\mathcal{O}(LKs_{layer}^2+LK^2s_{layer})$ respectively, which is same as FastGCN.

\section{Variance comparison between different algorithms} \label{sec:variance_comp}
We first provide the following lemma which is useful in our computation of variance.
\begin{lemma}\label{prop:variance}
Given two matrices $\Ab\in\RR^{n\times m}$ and $\Bb\in \RR^{m\times d}$, for any $i\in [m]$ define the probability $p_k = \|\Ab_{*,k}\|_2^2/\|\Ab\|_F^2$, and generate random sketching matrix $\Sbb$ accordingly, it holds that
\begin{align*}
\EE_{\Sbb}[\|\Ab\Sbb\Bb - \Ab\Bb\|_F^2] = \frac{1}{s}\big(\|\Ab\|_F^2\|\Bb\|_F^2 - \|\Ab\Bb\|_F^2\big),
\end{align*}
where $s$ is the number of samples.
\end{lemma}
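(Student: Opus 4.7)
The plan is to exploit the fact that the sketching matrix $\Sbb$ encodes $s$ iid column samples drawn from the importance distribution $\{p_k\}_{k=1}^m$. First I would rewrite the product as an average of $s$ iid rank-one random matrices,
$$\Ab\Sbb\Bb = \frac{1}{s}\sum_{t=1}^{s} Y_t, \qquad Y_t := \frac{1}{p_{k_t}}\Ab_{*,k_t}\Bb_{k_t,*},$$
where the indices $k_1,\ldots,k_s$ are iid with $\Pr(k_t = k) = p_k$. A one-line computation then gives unbiasedness, $\EE[Y_t] = \sum_k p_k \cdot p_k^{-1}\Ab_{*,k}\Bb_{k,*} = \Ab\Bb$, so $\EE[\Ab\Sbb\Bb] = \Ab\Bb$ as well.

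Since the $Y_t$ are iid and centered around $\Ab\Bb$, the cross-terms in the expanded squared Frobenius norm vanish, and the variance decomposes as
$$\EE\big[\|\Ab\Sbb\Bb - \Ab\Bb\|_F^2\big] = \frac{1}{s^2}\sum_{t=1}^{s}\EE\big[\|Y_t - \Ab\Bb\|_F^2\big] = \frac{1}{s}\Big(\EE[\|Y_1\|_F^2] - \|\Ab\Bb\|_F^2\Big).$$

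It remains to evaluate the second-moment term. Using the rank-one identity $\|\Ab_{*,k}\Bb_{k,*}\|_F^2 = \|\Ab_{*,k}\|_2^2\|\Bb_{k,*}\|_2^2$, I obtain
$$\EE[\|Y_1\|_F^2] \;=\; \sum_k p_k \cdot \frac{\|\Ab_{*,k}\|_2^2\|\Bb_{k,*}\|_2^2}{p_k^2} \;=\; \sum_k \frac{\|\Ab_{*,k}\|_2^2\|\Bb_{k,*}\|_2^2}{p_k}.$$
Substituting the importance probability $p_k = \|\Ab_{*,k}\|_2^2/\|\Ab\|_F^2$ cancels the $\|\Ab_{*,k}\|_2^2$ factor and leaves $\|\Ab\|_F^2\sum_k\|\Bb_{k,*}\|_2^2 = \|\Ab\|_F^2\|\Bb\|_F^2$, which plugged back into the previous display gives the claimed identity.

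There is no genuine obstacle — the computation is routine. The only two points that require care are (i) interpreting $\Sbb$ as $s$ iid draws with replacement, so that the per-sample contributions are independent and the cross-terms in the Frobenius expansion vanish (this is what produces the $1/s$ scaling), and (ii) recognizing that the specific choice $p_k\propto\|\Ab_{*,k}\|_2^2$ is precisely what makes the ratio $\|\Ab_{*,k}\|_2^2/p_k$ constant in $k$, which is what renders the second-moment sum equal to $\|\Ab\|_F^2\|\Bb\|_F^2$ on the nose rather than merely bounded by it.
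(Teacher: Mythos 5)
Your proof is correct and follows essentially the same route as the paper's: both reduce the squared Frobenius error to $\tfrac{1}{s}\bigl(\sum_k \|\Ab_{*,k}\|_2^2\|\Bb_{k,*}\|_2^2/p_k - \|\Ab\Bb\|_F^2\bigr)$ and then let the choice $p_k \propto \|\Ab_{*,k}\|_2^2$ collapse the sum to $\|\Ab\|_F^2\|\Bb\|_F^2$. The only difference is that you spell out the iid rank-one decomposition and the vanishing of cross-terms explicitly, whereas the paper asserts that first identity ``by definition''; your version is slightly more self-contained but mathematically identical.
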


\begin{proof}
By definition  we have
\begin{align}\label{eq:proof_var_ladies}
\EE_\Sbb[\|\Ab\Sbb\Bb - \Ab\Bb\|_F^2] &= \frac{1}{s}\bigg( \EE_k\big[\big\|\Ab_{*,k}\Bb_{k,*}/p_k\big\|_F^2\big] - \|\Ab\Bb\|_F^2\bigg)\notag\\
&=\frac{1}{s}\bigg( \sum_{k=1}^{m} p_k\cdot\big\|\Ab_{*,k}\Bb_{k,*}/p_k\big\|_F^2 - \|\Ab\Bb\|_F^2\bigg)\notag\\
&=\frac{1}{s}\bigg( \sum_{k=1}^{m} \frac{1}{p_k}\cdot\big\|\Ab_{*,k}\|_2^2\|\Bb_{k,*}\|_2^2 - \|\Ab\Bb\|_F^2\bigg).
\end{align}
Note that $p_k = \|\Ab_{*,k}\|_2^2/\|\Ab\|_F^2$, we have
\begin{align*}
\EE_\Sbb[\|\Ab\Sbb\Bb-\Ab\Bb\|_F^2] &= \frac{1}{s}\bigg( \sum_{k=1}^{m} \|\Ab\|_F^2\|\Bb_{k,*}\big\|_2^2 - \|\Ab\Bb\|_F^2\bigg)\notag\\
&=\frac{1}{s}\big(  \|\Ab\|_F^2\|\Bb\|_F^2 - \|\Ab\Bb\|_F^2\big).
\end{align*}

This completes the proof.

\end{proof}

Before computing the variance for each algorithm in detail, we first lay out several definitions. Our goal is to compute the average variance of the approximate embedding for the sampled nodes at the output layer. Specifically, let $\cS$ with $|\cS| = b$ denote the set of sampled nodes at the output layer, and $\Qb\in\RR^{b\times |\cV|}$ denotes the corresponding row selection matrix that extracts the embedding from the entire embedding matrix. Then the average variance can be defined as $\EE[\|\tilde \Zb - \Qb\Zb\|_F]/b$, where $\tilde \Zb\in\RR^{b\times d}$ denotes the approximated intermediate embedding by sampling algorithms and $\Zb = \Pb\Hb\Wb$ denotes  the  true intermediate embedding. For layer-wise sampling-based methods including FastGCN and LADIES, we sample $bs$ nodes at the hidden layer. For node-wise sampling-based methods including GraphSAGE and VR-GCN, we sample $m$ neighbors for each node.

In order to explicitly compare the variance of different algorithms, we made the following assumptions on the Laplacian matrix $\Pb$.
\begin{assumption}\label{assump:laplacian}
We assume there exist absolute constants $C_1$ and $C_2$ such that the following holds for all $i\in[|\cV|]$,
\begin{align*}
\|\Pb_{i, *}\|_0\le \frac{C_1}{|\cV|}\sum_{i=1}^{|\cV|}\|\Pb_{i, *}\|_0 \quad \mbox{and} \quad \|\Pb_{i,*}\|_2^2\le \frac{C_2}{|\cV|}\sum_{i=1}^n\|\Pb_{i,*}\|_F^2.
\end{align*}
\end{assumption}

Then we make the following assumption on the matrix product $\Hb\Wb$.
\begin{assumption}\label{assump:bound_norm}
We assume that  the $\ell_2$-norm of each row of $\Hb\Wb$ is upper bounded by  a constant, i.e., there exists a constant $\phi$ such that $\|\Hb_{i,*}\Wb\|_2\le \phi$ for all $i\in[|\cV|]$.
\end{assumption}

\paragraph{Variance of FastGCN.}
Note that FastGCN does not apply layer-dependent sampling and focus on the variance $\EE[\|\Pb\Sbb\Hb\Wb - \Zb\|_F^2]$, where $\Sbb$ is generated according to \eqref{eq:def_S}. By applying the importance probabilities configured in \cite{fastgcn}, i.e., $p_i = \|\Pb_{*,i}\|_2/\|\Pb\|_F$, it can be derived that
\begin{align*}
\EE[\|\tilde\Zb - \Qb\Zb\|_F^2] &=\frac{b}{|\cV|}\EE[\|\Pb\Sbb\Hb\Wb - \Zb\|_F^2]\notag\\
& =\frac{b}{|\cV|s}\bigg( \sum_{k=1}^{|\cV|} \frac{1}{p_k}\cdot\big\|\Pb_{*,k}\|_2^2\|\Hb_{k,*}\Wb\|_2^2 - \|\Zb\|_F^2\bigg)\notag\\
& = \frac{b\big(\|\Pb\|_F^2\|\Hb\Wb\|_F^2 - \|\Zb\|_F^2\big)}{|\cV|s}.
\end{align*}
where the first equation holds due to the fact that $\EE[\Qb] = b\Ib/|\cV|$ and $\Qb$ and $\Sbb$ are independent, and the second equality is by Lemma \ref{prop:variance}. Then under Assumption \ref{assump:bound_norm}, we have $\|\Hb\Wb\|_F^2\le |\cV|\phi$, which further implies that
\begin{align*}
\EE[\|\tilde\Zb - \Qb\Zb\|_F^2]/b\le \frac{\phi \|\Pb\|_F^2}{s}.
\end{align*}

\paragraph{Variance of GraphSAGE.}
We assume for each node, GraphSAGE randomly sample $m$ nodes from its neighbors to estimate the embedding. Also, the randomness of sampling at the hidden layer is independent of that at the output layer. Therefore, it is clearly that
\begin{align}\label{eq:vr_graphsage}
\EE[\|\tilde \Zb - \Qb\Zb\|_F^2] &= \frac{b}{|\cV|}\cdot \sum_{i=1}^{|\cV|} \EE[\|\tilde \Zb_{i,*} - \Zb_{i,*}\|_2^2]\notag\\
&=\frac{b}{|\cV|}\cdot \sum_{i=1}^{|\cV|}\frac{\|\Pb_{i,*}\|_0}{m}\bigg(\sum_{j=1}^{|\cV|} \|\Pb_{i,j}\Hb_{j,*}\Wb\|_2^2 - \|\Pb_{i,*}\Hb\Wb\|_F^2\bigg)\notag\\
&=\frac{b}{m|\cV|}\sum_{i=1}^{|\cV|}\|\Pb_{i,*}\|_0\bigg(\sum_{j=1}^{|\cV|}\|\Pb_{i,j}\Hb_{j,*}\Wb\|_2^2 - \frac{b}{m|\cV|}\|\Pb\Hb\Wb\|_F^2\bigg).
\end{align}
Under Assumption \ref{assump:bound_norm} it is clear that
\begin{align*}
\EE[\|\tilde \Zb - \Qb\Zb\|_F^2]/b &\le \frac{1 }{m|\cV|}\sum_{i=1}^{|\cV|}\|\Pb_{i,*}\|_0\sum_{j=1}^{|\cV|} \|\Pb_{i,j}\Hb_{j,*}\Wb\|_2^2 \notag\\
&\le \frac{C_1D }{m|\cV|}\sum_{i=1}^{|\cV|}\sum_{j=1}^{|\cV|} \|\Pb_{i,j}\Hb_{j,*}\Wb\|_2^2\notag\\
&\le \frac{C_1D\phi\|\Pb\|_F^2}{m|\cV|},
\end{align*}
where the second inequality is by Assumption \ref{assump:laplacian} and the definition of $D$, and the second inequality is by Assumption \ref{assump:bound_norm}.

\paragraph{Variance of VR-GCN.} Assuming we have history activation matrix $\bar\Hb$, VR-GCN turns the estimation of $\Pb_{i,*}\Hb\Wb$ to that of $\Pb_{i,*}(\Hb-\bar\Hb)\Wb $. Thus, we can simply derive the variance of VR-GCN by replacing $\Hb$ in \eqref{eq:vr_graphsage} with $\Hb - \bar \Hb$. Therefore, the variance of VR-GCN is
\begin{align*}
\EE[\|\tilde \Zb - \Qb\Zb\|_F^2]  =\frac{b}{m|\cV|}\sum_{i=1}^{|\cV|}\|\Pb_{i,*}\|_0\bigg(\sum_{j=1}^{|\cV|}\|\Pb_{i,j}\big(\Hb_{j,*}-\bar\Hb_{j,*}\big)\Wb\|_2^2 - \frac{b}{m|\cV|}\|\Pb(\Hb-\bar\Hb)\Wb\|_F^2\bigg).
\end{align*}
The variance of VR-GCN no longer depends on the norm of $\|\Hb_{j,*}\Wb\|_2^2$ but $\|(\Hb_{j,*}-\bar\Hb_{j,*})\Wb\|_2^2$. We define $\Delta\phi = \max_{j}\|(\Hb_{j,*}-\bar\Hb_{j,*})\Wb\|_2^2$ and thus
\begin{align*}
\EE[\|\tilde \Zb - \Qb\Zb\|_F^2]/b \le \frac{D\Delta\phi  \|\Pb\|_F^2}{m|\cV|}.
\end{align*}

\paragraph{Variance of LADIES.} Different from other algorithms, the random samples at the output layer and hidden layer are not independent. We first define $\cV(\cS)$ by the union of neighbors of nodes in $\cS$ (i.e., nodes sampled in the upper layer). Then we have
\begin{align*}
\EE[\|\tilde \Zb - \Qb\Zb\|_F^2]& = \EE\big[\EE[\|\tilde \Zb - \Qb\Zb\|_F^2|\Qb]\big]\notag\\
& = \EE\big[\EE_\Sbb[\|\Qb\Pb\Sbb\Lb\Hb\Wb - \Qb\Pb\Lb\Hb\Wb\|_F^2]|\Qb\big]\notag\\
&=\frac{1}{s}\EE\big[\|\Qb\Pb\|_F^2 \|\Lb\Hb\Wb\|_F^2 - \|\Qb\Zb\|_F^2\big],
\end{align*}
where $\Lb$ is a diagonal matrix with $\Lb_{i,j} = 1$ if $i=j, i\in\cV(\cS)$ and $\Lb_{i,j}=0$ otherwise,
and the second equation is by Lemma \ref{prop:variance} and our choice of importance probabilities.

Note that by Assumption \ref{assump:laplacian}, we have 
\begin{align*}
\|\Qb\Pb\|_F^2 = \sum_{i\in\cS}\|\Pb_{i,*}\|_2^2 \le \frac{C_2b\|\Pb\|_F^2}{|\cV|}.
\end{align*}
Then under Assumption \ref{assump:bound_norm}, we have
\begin{align*}
\EE[\|\tilde \Zb - \Qb\Zb\|_F^2]/b \le \frac{1}{sb}\EE[\|\Qb\Pb\|_F^2\cdot \|\Lb\Hb\Wb\|_F^2] \le \frac{C_2\phi \|\Pb\|_F^2}{|\cV|s}\EE[|\cV(\cS)|]=\frac{C_2\phi  \|\Pb\|_F^2}{|\cV|s}\bar V(b).
\end{align*}
where $\bar V(b)$ denotes the average number of nodes that are connected to the nodes sampled in the training batch.

\begin{figure*}[t!]
    \subfigure[Sample Number=1]{
        \includegraphics[width=0.31\textwidth,]{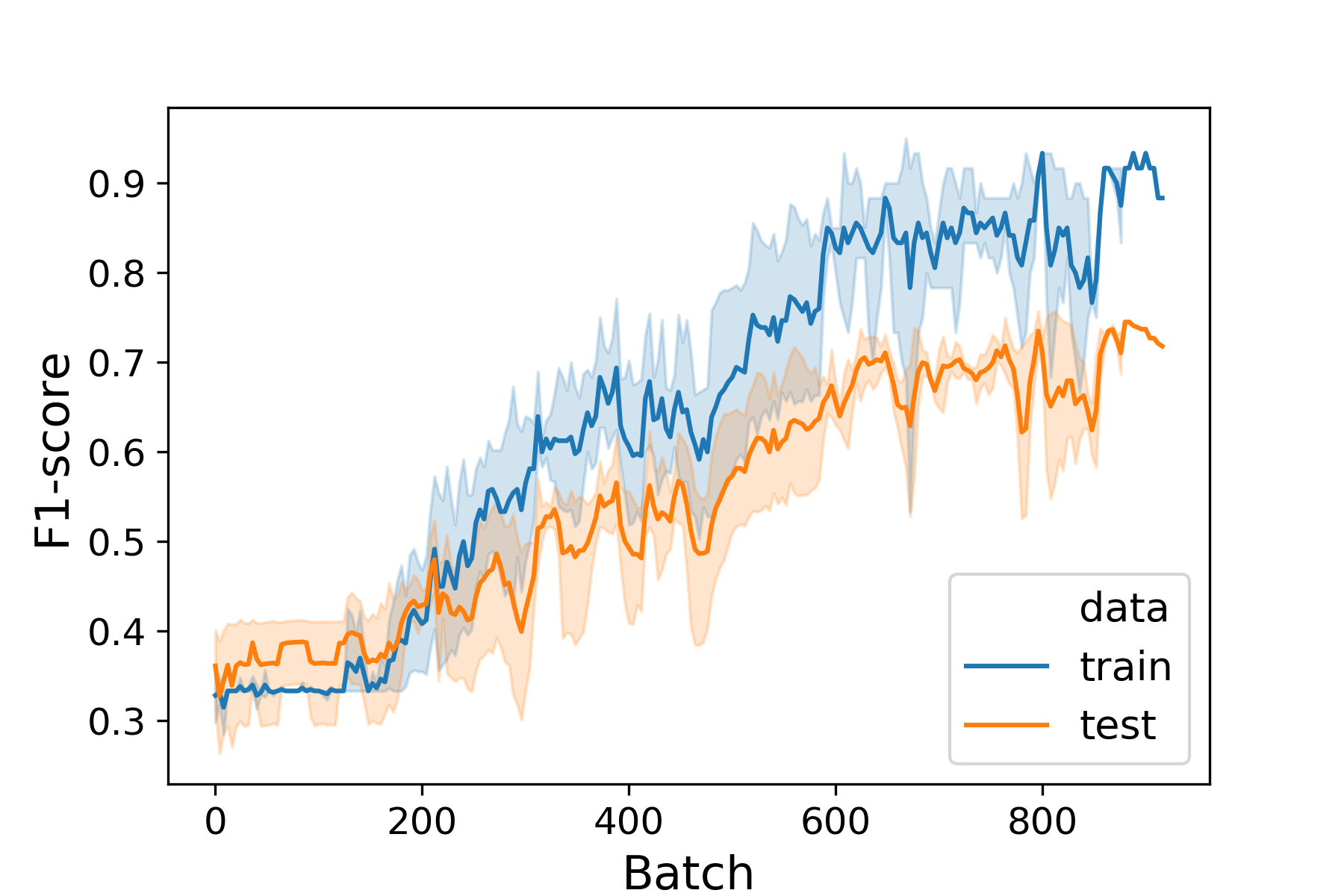}
    }
    \subfigure[Sample Number=2]{
        \includegraphics[width=0.31\textwidth]{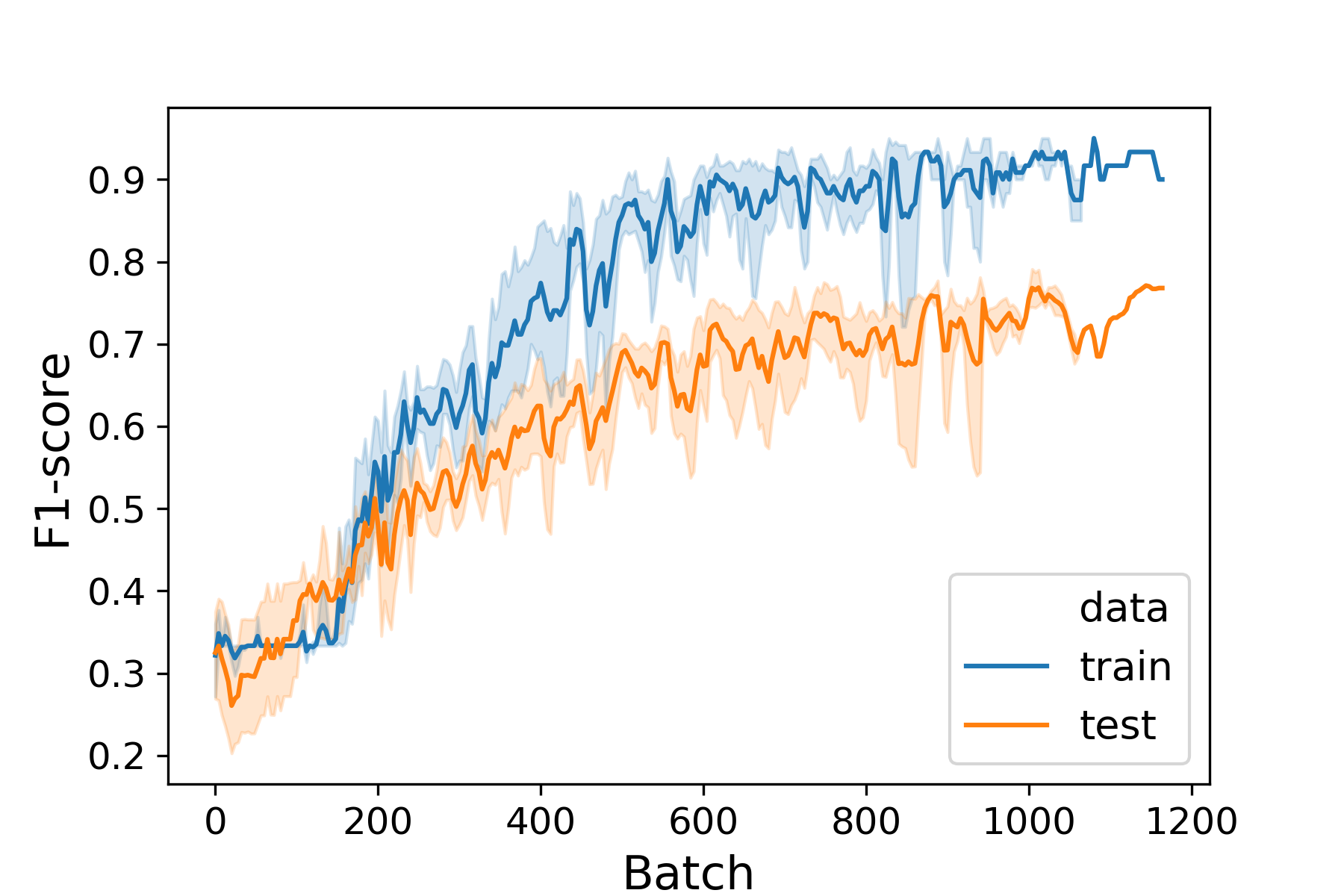}
    }
    \subfigure[Sample Number=8]{
        \includegraphics[width=0.31\textwidth]{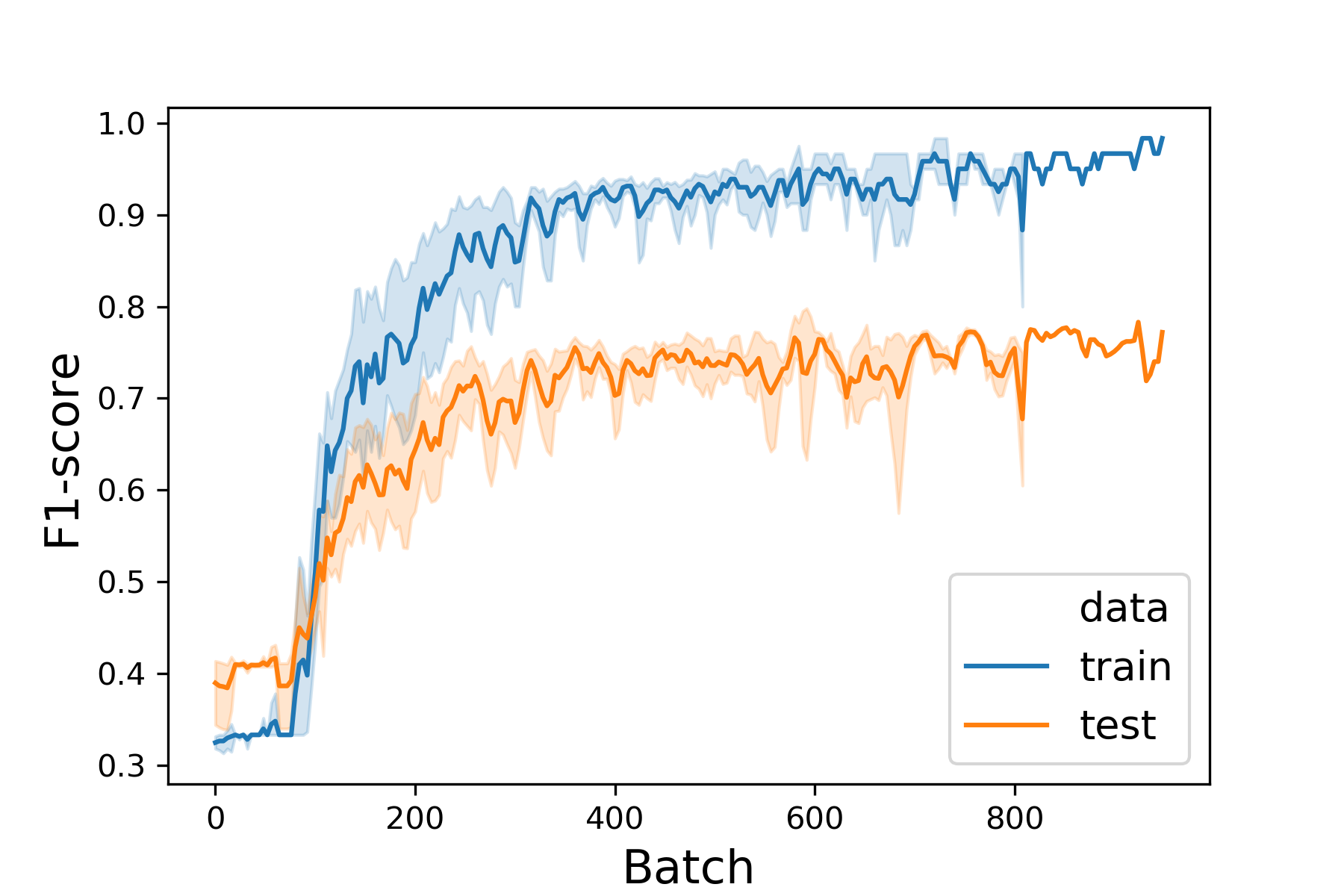}
    }
    \subfigure[Sample Number=32]{
        \includegraphics[width=0.31\textwidth]{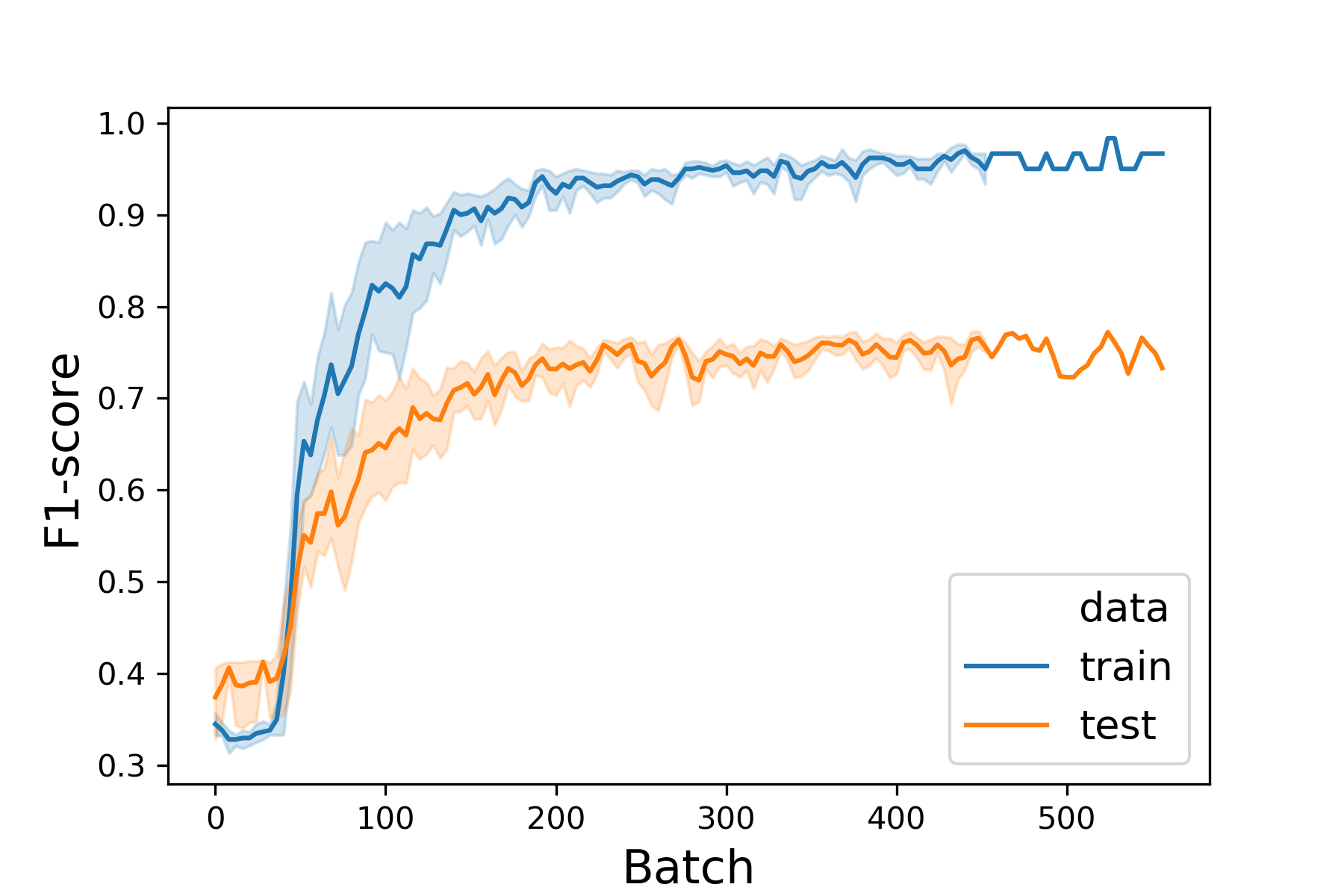}
    }
    \subfigure[Sample Number=128]{
        \includegraphics[width=0.31\textwidth]{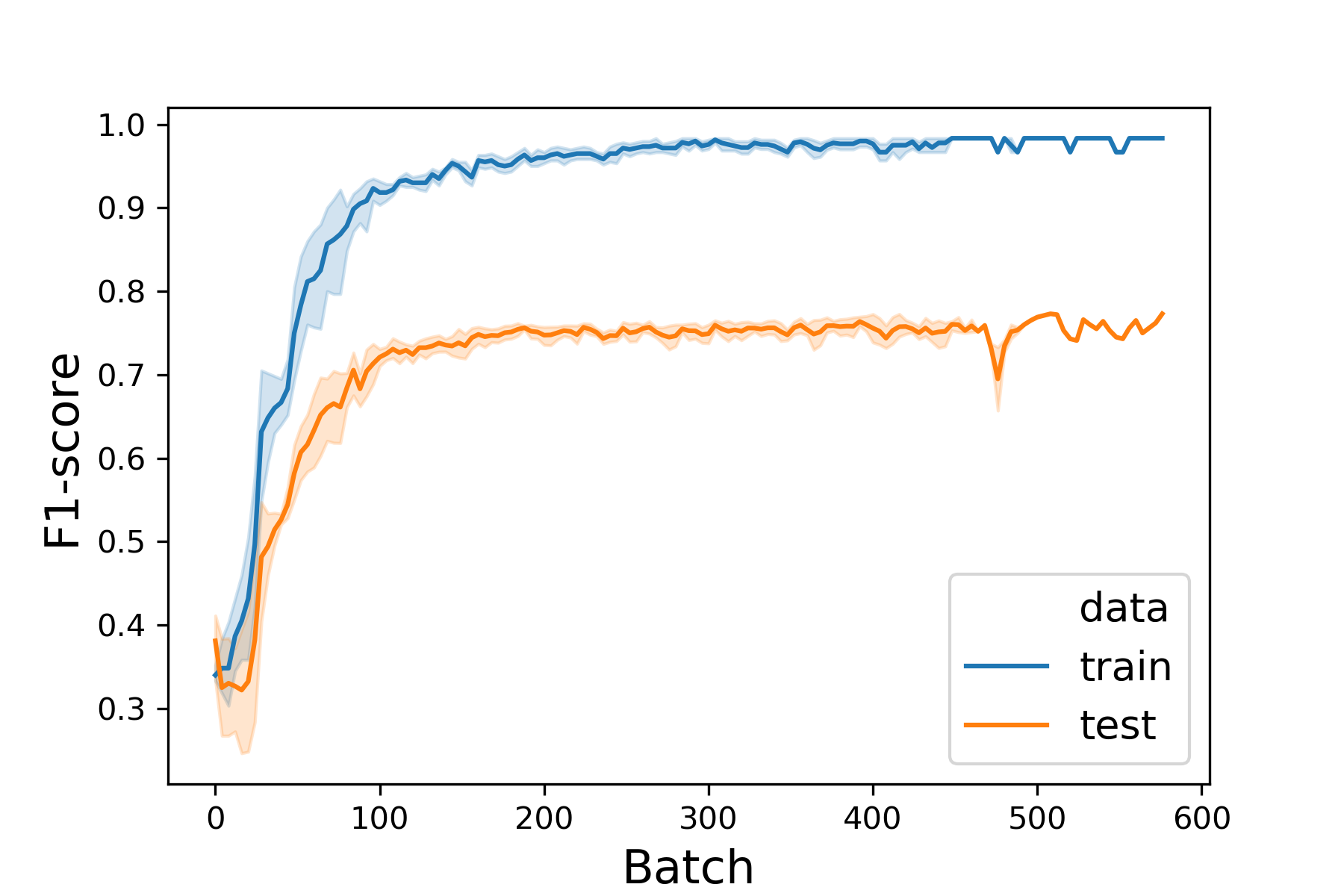}
    }
    \subfigure[Sample Number=512]{
        \includegraphics[width=0.31\textwidth]{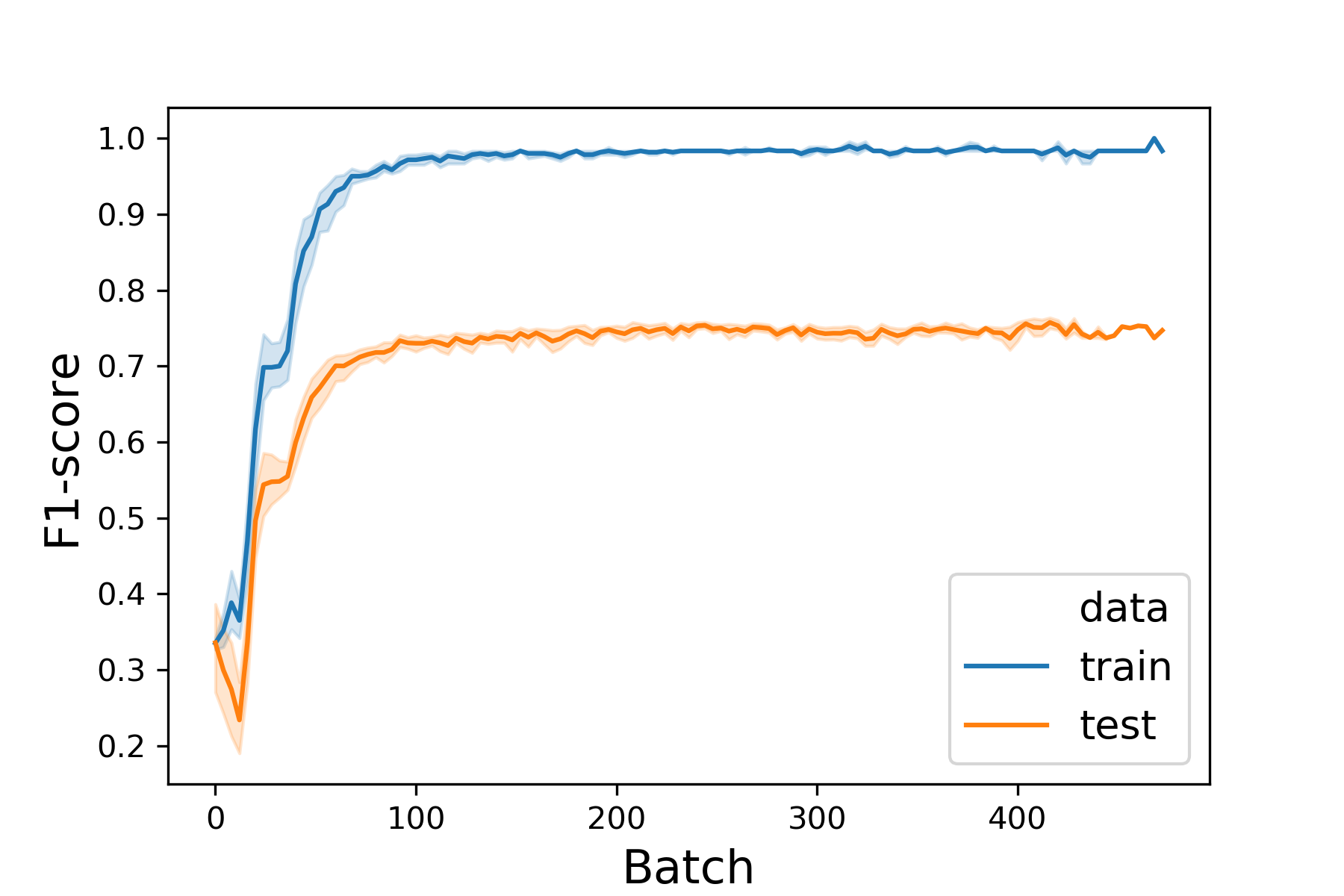}
    }
    \caption{Training Curve of Pubmed with different sampling numbers.}\label{conv}
\end{figure*}

\begin{figure*}[t!]

\centering
\includegraphics[width=0.8\textwidth]{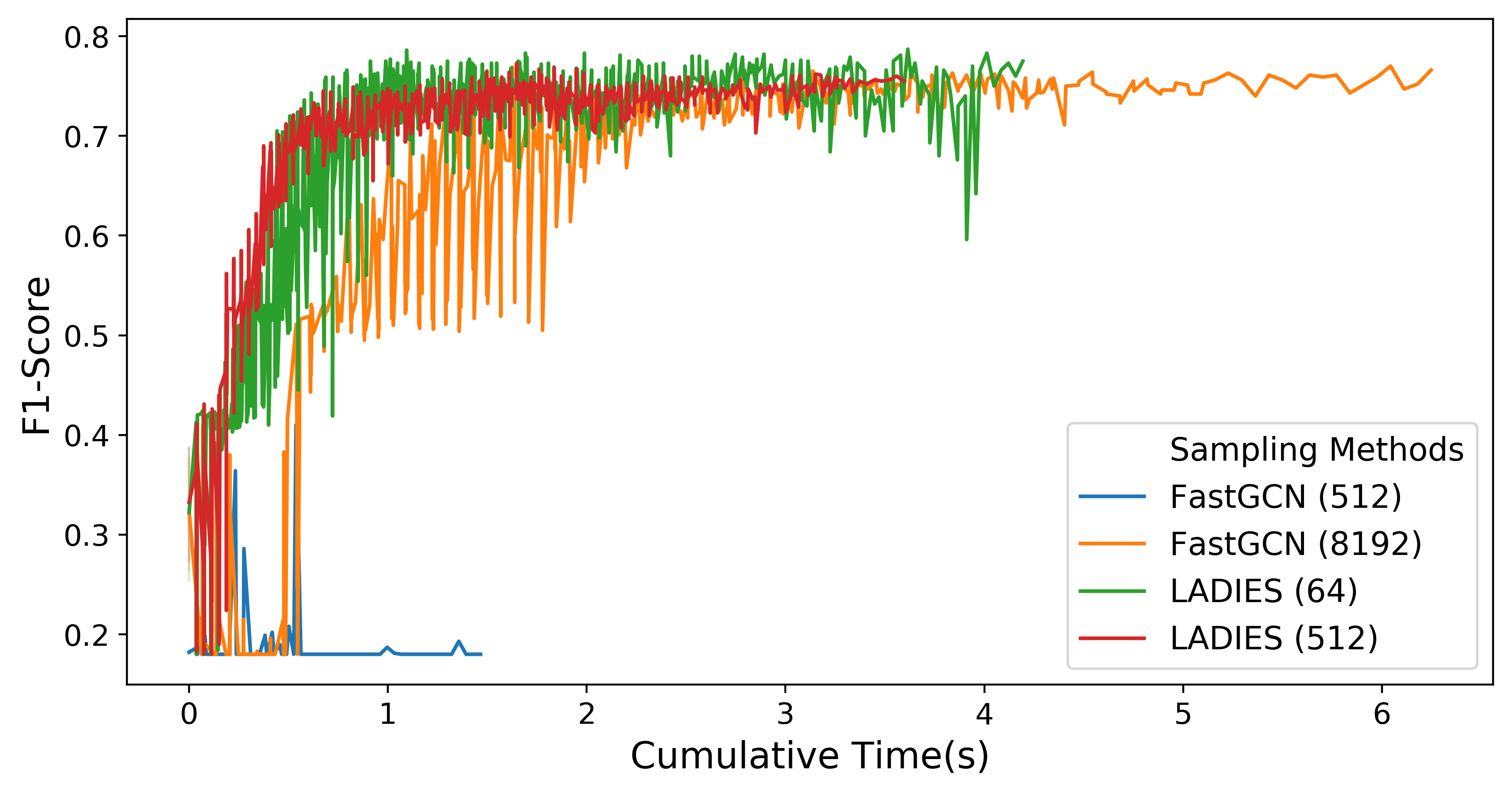}
\caption{Convergence Curve of Pubmed with different sampling methods. \label{convergence}}
\end{figure*}
\section{Convergence Curves}

Figure~\ref{conv} shows the training curve of LADIES on Pubmed, with different sampling number. We can see that even with a very small sampling number (i.e., 8), it's already converged pretty well. On the contrary, our previous experiments show that FastGCN with a huge sampling number (i.e., 512) cannot even converge. Figure~\ref{convergence} shows that our method can achieve better and robust convergence performance than FastGCN with smaller sampling number and fewer time.

\section{Differences with FastGCN}
Since our proposed LADIES also uses Layer-wise Importance Sampling, here we emphasize the key differences in the following aspects: (1)
our algorithm restricts the candidate nodes in the union of the neighborhoods of the sampled nodes in the upper layer, which can lead to a much denser sampled adjacency matrix compared with FastGCN;
(2) our sampling method is layer-dependent, and the importance sampling probabilities are novelly derived from optimizing on the theoretical derivation of the modified Laplacian matrix (see formula (\ref{eq:importance_prob})), which leads to smaller estimation variance than FastGCN, as shown in Table 2;
and (3) our algorithm uses normalization to the modified Laplacian matrix in each layer, which further stabilizes the forward process and avoids exploding/vanishing gradient. Because of the above innovative and principled algorithmic designs, our algorithm enjoys lower time and memory complexities, as well as better predictive performance than FastGCN in both theory and practice.

\end{document}